\title{Understanding Diffusion Objectives as the ELBO\\ with Simple Data Augmentation}
\author{Diederik P. Kingma \\ 
        Google DeepMind \\
        \texttt{durk@google.com}
        \And
        Ruiqi Gao\\
        Google DeepMind\\
        \texttt{ruiqig@google.com}}
  \definecolor{orange}{HTML}{ff7f0e}
  \definecolor{blue}{HTML}{1f77b4}
\newtcbox{\mymath}[1][]{%
    nobeforeafter, math upper, tcbox raise base,
    enhanced, colframe=white!30!black,
    colback=white!30, boxrule=1pt,
    #1}
\def\eqref#1{equation~\ref{#1}}
\def\Eqref#1{Equation~\ref{#1}}
\def\1{\bm{1}}
\def\rvf{{\mathbf{f}}}
\def\rvo{{\mathbf{o}}}
\def\rvs{{\mathbf{s}}}
\def\rvv{{\mathbf{v}}}
\def\rvw{{\mathbf{w}}}
\def\rvx{{\mathbf{x}}}
\def\rvz{{\mathbf{z}}}
\def\rmF{{\mathbf{F}}}
\def\rmI{{\mathbf{I}}}
\DeclareMathAlphabet{\mathsfit}{\encodingdefault}{\sfdefault}{m}{sl}
\SetMathAlphabet{\mathsfit}{bold}{\encodingdefault}{\sfdefault}{bx}{n}
\newcommand{\E}{\mathbb{E}}
\DeclareMathOperator*{\argmax}{arg\,max}
\newcommand{\bT}{{\boldsymbol{\theta}}}
\newcommand{\pT}{p_{\bT}}
\newcommand{\snT}{\mathbf{s}_{\bT}}
\newcommand{\bfI}{\mathbf{I}}
\newcommand{\bepsilon}{{\boldsymbol{\epsilon}}}
\def\@onedot{\ifx\@let@token.\else.\null\fi\xspace}
\DeclareRobustCommand\onedot{\futurelet\@let@token\@onedot}
\newcommand{\Ltfull}[1]{D_{KL}(q(\rvz_{#1}|\rvx)||p(\rvz_{#1}))}
\newcommand{\KLjoint}[1]{D_{KL}(q(\rvz_{#1})||p(\rvz_{#1}))}
\newcommand{\Lt}[1]{\mathcal{L}(#1;\rvx)}
\newcommand{\fl}{f_{\lambda}}
\newcommand{\fli}{f^{-1}_{\lambda}}
\newcommand{\tfl}{\widetilde{f}_{\lambda}}
\newcommand{\tfli}{\widetilde{f}_{\lambda}^{-1}}
\newcommand{\pl}{p}
\newcommand{\lmin}{\lambda_{\text{min}}}
\newcommand{\lmax}{\lambda_{\text{max}}}
\newtheorem{theorem}{Theorem}
\begin{document}

\maketitle

\begin{abstract}

To achieve the highest perceptual quality, state-of-the-art diffusion models are optimized with objectives that typically look very different from the maximum likelihood and the Evidence Lower Bound (ELBO) objectives. In this work, we reveal that diffusion model objectives are actually closely related to the ELBO.

Specifically, we show that all commonly used diffusion model objectives equate to a weighted integral of ELBOs over different noise levels, where the weighting depends on the specific objective used. Under the condition of monotonic weighting, the connection is even closer: the diffusion objective then equals the ELBO, combined with simple data augmentation, namely Gaussian noise perturbation. We show that this condition holds for a number of state-of-the-art diffusion models. 

In experiments, we explore new monotonic weightings and demonstrate their effectiveness, achieving state-of-the-art FID scores on the high-resolution ImageNet benchmark.
\end{abstract}

\section{Introduction}

\raggedbottom

\begin{figure*}[!b]
	\centering
     \begin{subfigure}[b]{0.32\textwidth}
         \centering
         \includegraphics[width=\textwidth]{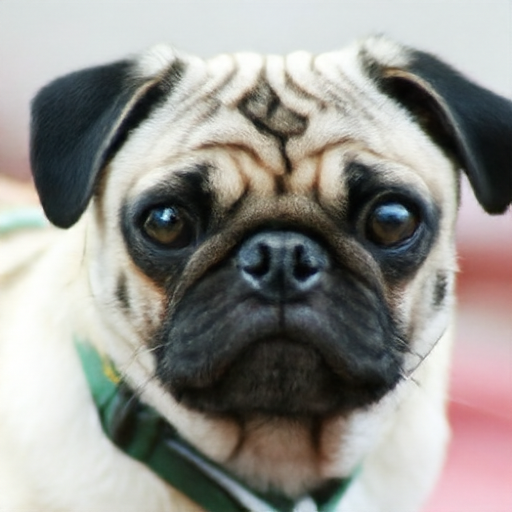}
         \caption{512 $\times$ 512}
     \end{subfigure}
     \hfill
     \begin{subfigure}[b]{0.32\textwidth}
         \centering
         \includegraphics[width=\textwidth]{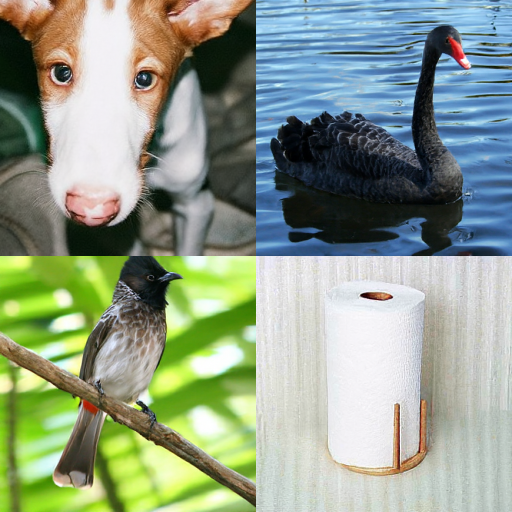}
         \caption{256 $\times$ 256}
     \end{subfigure}
     \hfill
     \begin{subfigure}[b]{0.32\textwidth}
         \centering
         \includegraphics[width=\textwidth]{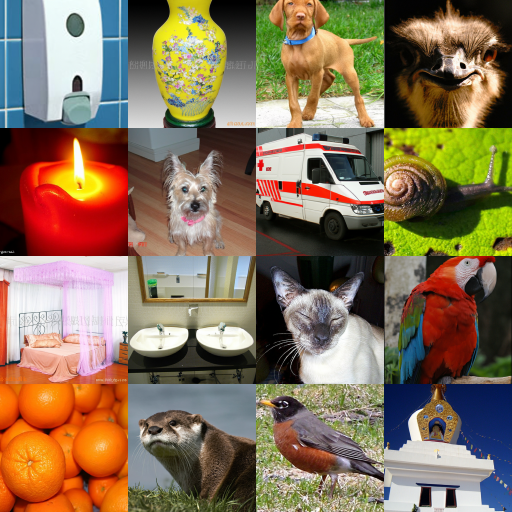}
         \caption{128 $\times$ 128}
     \end{subfigure}
        \caption{Samples generated from our diffusion models trained on the ImageNet dataset; see Section \ref{sec:experiments} for details and Appendix \ref{sec:samples} for more samples.}
        \label{fig:i128_sample}
\end{figure*} 

Diffusion-based generative models, or diffusion models in short, were first introduced by \cite{sohl2015deep}. After years of relative obscurity, this class of models suddenly rose to prominence with the work of \cite{song2019generative} and \cite{ho2020denoising} who demonstrated that, with further refinements in model architectures and objective functions, diffusion models can perform state-of-the-art image generation.

The diffusion model framework has since been successfully applied to text-to-image generation \citep{rombach-cvpr-2022,nichol-glide,ramesh-dalle2,sahariac-imagen,yu-parti-2022,nichol2021improved,ho2021cascaded,dhariwal2021diffusion,CogView}, image-to-image generation \citep{saharia2021image,sahariac-palette,whang-cvpr-2022}, 3D synthesis \citep{poole2022dreamfusion,watson20223dim}, text-to-speech \citep{chen-iclr-2021,kong-arxiv-2020,chen-interspeech-2021}, and density estimation \citep{kingma2021variational,song2021maximum}.

Diffusion models can be interpreted as a special case of deep variational autoencoders (VAEs) \citep{kingma2013auto,rezende2014stochastic} with a particular choice of inference model and generative model. Just like VAEs, the original diffusion models~\citep{sohl2015deep} were optimized by maximizing the variational lower bound of the log-likelihood of the data, also called the evidence lower bound, or ELBO for short. It was shown by \emph{Variational Diffusion Models} (VDM) \citep{kingma2021variational} and \citep{song2021maximum} how to optimize \emph{continuous-time} diffusion models with the ELBO objective, achieving state-of-the-art likelihoods on image density estimation benchmarks. 

However, the best results in terms of sample quality metrics such as FID scores were achieved with other objectives, for example a denoising score matching objective \citep{song2019generative} or a simple noise-prediction objective \citep{ho2020denoising}. These now-popular objective functions look, on the face of it, very different from the traditionally popular maximum likelihood and ELBO objectives. 
Through the analysis in this paper, we reveal that all training objective used in state-of-the-art diffusion models are actually closely related to the ELBO objective.

This paper is structured as follows:
\begin{itemize}
\item In Section \ref{sec:model_family} we introduce the broad diffusion model family under consideration.
\item In Section \ref{sec:diffobj}, we show how the various diffusion model objectives in the literature can be understood as special cases of a \emph{weighted loss} \citep{kingma2021variational,song2021maximum}, with different choices of weighting. The weighting function specifies the weight per noise level. In Section \ref{sec:invariance} we show that during training, the noise schedule acts as a importance sampling distribution for estimating the loss, and is thus important for efficient optimization. Based on this insight we propose a simple adaptive noise schedule.
\item In Section \ref{sec:mainresult}, we present our main result: that if the weighting function is a monotonic function of time, then the weighted loss corresponds to maximizing the ELBO with data augmentation, namely Gaussian noise perturbation. This holds for, for example, the $\rvv$-prediction loss of ~\citep{salimans2022progressive} and flow matching with the optimal transport path~\citep{lipman2022flow}.
\item In Section \ref{sec:experiments} we perform experiments with various new monotonic weights on the ImageNet dataset, and find that our proposed monotonic weighting produces models with sample quality that are competitive with the best published results, achieving state-of-art FID and IS scores on high resolution ImageNet generation.
\end{itemize}

\subsection{Related work}
\label{sec:related_work}

The main sections reference much of the related work. Earlier work \citep{kingma2021variational,song2021maximum,huang2021variational,vahdat2021score}, including Variational Diffusion Models \citep{kingma2021variational}, showed how to optimize continous-time diffusion models towards the ELBO objective. We generalize these earlier results by showing that any diffusion objective that corresponds with monotonic weighting corresponds to the ELBO, combined with a form of DistAug \citep{child2019generating}. DistAug is a method of training data distribution augmentation for generative models where the model is conditioned on the data augmentation parameter at training time, and conditioned on 'no augmentation' at inference time. The type of data augmentation under consideration in this paper, namely additive Gaussian noise, is also a form of data distribution smoothing, which has been shown to improve sample quality in autoregressive models by \cite{meng2021improved}.

\cite{kingma2021variational} showed how the ELBO is invariant to the choice of noise schedule, except for the endpoints. We generalize this result by showing that the invariance holds for any weighting function.

\section{Model}
\label{sec:model_family}

Suppose we have a dataset of datapoints drawn from $q(\rvx)$. We wish to learn a generative model $\pT(\rvx)$ that approximates $q(\rvx)$. We'll use shorthand notation $p := \pT$.

The observed variable $\rvx$ might be the output of a pre-trained encoder, as in \emph{latent diffusion models} ~\citep{vahdat2021score,rombach2022high}, on which the popular \emph{Stable Diffusion} model is based. Our theoretical analysis also applies to this type of model.

In addition to the observed variable $\rvx$, we have a series of latent variables $\rvz_t$ for timesteps $t \in [0,1]$: $\rvz_{0, ..., 1} := \rvz_0, ..., \rvz_1$. The model consists of two parts: a \emph{forward process} forming a conditional joint distribution $q(\rvz_{0, ..., 1}|\rvx)$, and a \emph{generative model} forming a joint distribution $p(\rvz_{0, ..., 1})$.

\subsection{Forward process and noise schedule}
\label{sec:noise_schedule}

\begin{figure}[t]
    \centering
    \includegraphics[width=1.0\textwidth]{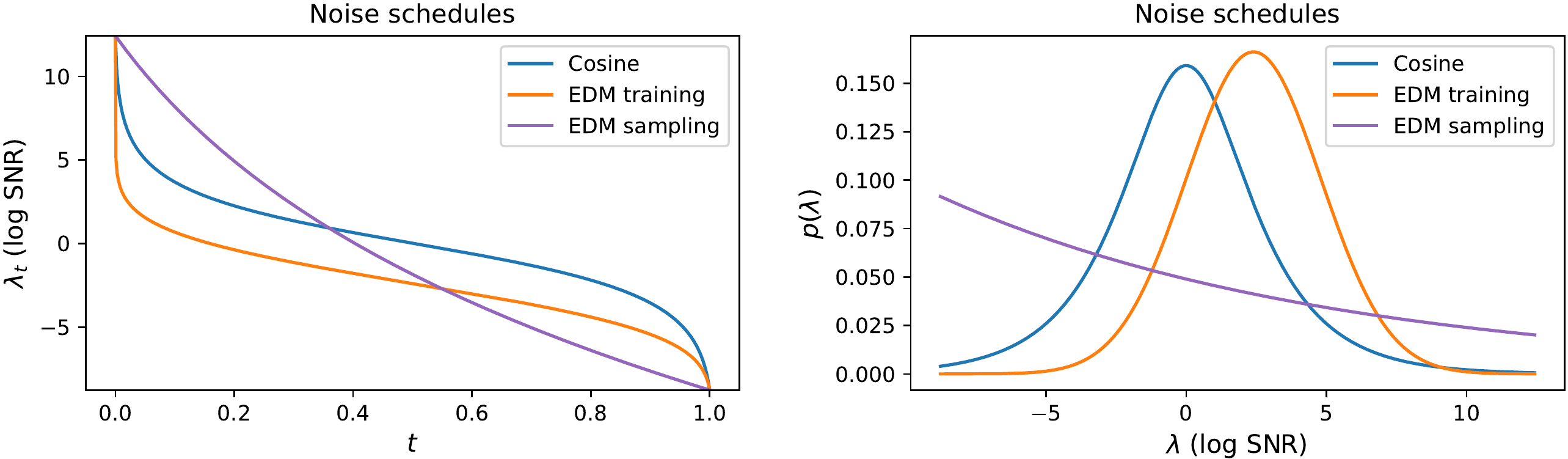}
    \caption{\textbf{Left:} Noise schedules used in our experiments: cosine \citep{nichol2021improved} and EDM \citep{karras2022elucidating} training and sampling schedules. \textbf{Right:} The same noise schedules, expressed as probability densities $p(\lambda) = -dt/d\lambda$. See Section \ref{sec:noise_schedule} and Appendix \ref{sec:schedules} for details.}
    \label{fig:noise_schedules}
\end{figure}

The forward process is a Gaussian diffusion process, giving rise to a conditional distribution $q(\rvz_{0, ..., 1}|\rvx)$; see Appendix \ref{sec:sdes} for details. For every $t \in [0,1]$, the marginal distribution $q(\rvz_t|\rvx)$ is given by:
\begin{align}
\rvz_t =  \alpha_\lambda \rvx + \sigma_\lambda \bepsilon \;\;\text{where}\;\; \bepsilon \sim \mathcal{N}(0,\bfI)
\end{align}
In case of the often-used \emph{variance preserving} (VP) forward process, $\alpha^2_\lambda = \text{sigmoid}(\lambda_t)$ and $\sigma^2_\lambda = \text{sigmoid}(-\lambda_t)$, but other choices are possible; our results are agnostic to this choice. The \emph{log signal-to-noise ratio} (log-SNR) for timestep $t$ is given by $\lambda = \log(\alpha^2_\lambda / \sigma^2_\lambda)$. 

The \emph{noise schedule} is a strictly monotonically decreasing function $\fl$ that maps from the time variable $t \in [0,1]$ to the corresponding log-SNR $\lambda$: $\lambda = \fl(t)$. We sometimes denote the log-SNR as $\lambda_t$ to emphasize that it is a function of $t$. The endpoints of the noise schedule are given by $\lmax := \fl(0)$ and $\lmin := \fl(1)$. See Figure \ref{fig:noise_schedules} for a visualization of commonly used noise schedules in the literature, and Appendix \ref{sec:schedules} for more details. 

Due to its monotonicity, $\fl$ is invertible: $t = \fli(\lambda)$. Given this bijection, we can do a change of variables: a function of the value $t$ can be equivalently written as a function of the corresponding value $\lambda$, and vice versa, which we'll make use of in this work.

During model training, we sample time $t$ uniformly: $t \sim \mathcal{U}(0,1)$, then compute $\lambda = \fl(t)$. This results in a distribution over noise levels $p(\lambda) = -dt/d\lambda = -1/\fl'(t)$ (see Section \ref{sec:schedules}), which we also plot in Figure \ref{fig:noise_schedules}.

Sometimes it is best to use a different noise schedule for sampling from the model than for training. During sampling, the density $p(\lambda)$ gives the relative amount of time the sampler spends at different noise levels.

\subsection{Generative model}

The data $\rvx \sim \mathcal{D}$, with density $q(\rvx)$, plus the forward model defines a joint distribution $q(\rvz_0, ..., \rvz_1) = \int q(\rvz_0, ..., \rvz_1|\rvx) q(\rvx) d\rvx$, with marginals $q_t(\rvz) := q(\rvz_t)$. The generative model defines a corresponding joint distribution over latent variables: $p(\rvz_0, ..., \rvz_1)$. 

For large enough $\lmax$, $\rvz_0$ is almost identical to $\rvx$, so learning a model $p(\rvz_0)$ is practically equivalent to learning a model $p(\rvx)$. For small enough $\lmin$, $\rvz_1$ holds almost no information about $\rvx$, such that there exists a distribution $p(\rvz_1)$ satisfying $\Ltfull{1} \approx 0$. Usually we can use $p(\rvz_1) = \mathcal{N}(0,\bfI)$.

Let $\snT(\rvz; \lambda)$ denote a \emph{score model}, which is a neural network that we let approximate $\nabla_{\rvz} \log q_t(\rvz)$ through methods introduced in the next sections. If $\snT(\rvz;\lambda) = \nabla_{\rvz} \log q_t(\rvz)$, then the forward process can be exactly reversed; see Appendix \ref{sec:sampling}.

If $\KLjoint{1} \approx 0$ and $\snT(\rvz; \lambda) \approx \nabla_{\rvz} \log q_t(\rvz)$, then we have a good generative model in the sense that $\KLjoint{0,...,1} \approx 0$, which implies that $\KLjoint{0} \approx 0$ which achieves our goal. So, our generative modeling task is reduced to learning a score network $\snT(\rvz; \lambda)$ that approximates $\nabla_{\rvz} \log q_t(\rvz)$.

Sampling from the generative model can be performed by sampling $\rvz_1 \sim p(\rvz_1)$, then (approximately) solving the reverse SDE using the estimated $\snT(\rvz; \lambda)$. Recent diffusion models have used increasingly sophisticated procedures for approximating the reverse SDE; see Appendix \ref{sec:sampling}. In experiments we use the DDPM sampler from~\citet{ho2020denoising} and the stochastic sampler with Heun's second order method proposed by~\cite{karras2022elucidating}.

\section{Diffusion Model Objectives}
\label{sec:diffobj}

\definecolor{burntorange}{rgb}{0.8, 0.33, 0.0}
\newcommand{\colorA}[1]{\textcolor{burntorange}{#1}}
\newcommand{\colorB}[1]{\textcolor{blue}{#1}}
\newcommand{\colorC}[1]{\textcolor{red}{#1}}
\newcommand{\colorD}[1]{\textcolor{ForestGreen}{#1}}
\newcommand{\colorE}[1]{\textcolor{RoyalPurple}{#1}}

\paragraph{Denoising score matching.}
Above, we saw that we need to learn a score network $\snT(\rvz; \lambda_t)$ that approximates $\nabla_{\rvz} \log q_t(\rvz)$, for all noise levels $\lambda_t$. It was shown by ~\citep{vincent2011connection,song2019generative} that this can be achieved by minimizing a denoising score matching objective over all noise scales and all datapoints $\rvx \sim \mathcal{D}$:
\begin{align*}
\mathcal{L}_{\text{DSM}}(\rvx) 
&= 
\E_{t \sim \mathcal{U}(0,1), \bepsilon \sim \mathcal{N}(0,\mathbf{I})} \left[
\colorA{\tilde{w}(t)} \cdot || 
\colorC{\rvs_{\bT}(\rvz_t, \lambda_t)} - \colorE{\nabla_{\rvz_t} \log q(\rvz_t | \rvx)}
||_2^2
\right]
\end{align*}
where $\rvz_t = \alpha_\lambda \rvx + \sigma_\lambda \bepsilon$.

\paragraph{The $\epsilon$-prediction objective.}
Most contemporary diffusion models are optimized towards a noise-prediction loss introduced by \citep{ho2020denoising}. In this case, the score network is typically parameterized through a noise-prediction ($\bepsilon$-prediction) model: $\snT(\rvz; \lambda) = - \hat{\bepsilon}_{\bT}(\rvz;\lambda) / \sigma_\lambda$. (Other options, such as $\rvx$-prediction $\rvv$-prediction, and EDM parameterizations, are explained in Appendix \ref{sec:param}.) The noise-prediction loss is:
\begin{align}
\mathcal{L}_{\bepsilon}(\rvx) 
&=
\frac{1}{2} 
\E_{t \sim \mathcal{U}(0,1), \bepsilon \sim \mathcal{N}(0,\mathbf{I})} 
\left[
|| 
\colorC{\hat{\bepsilon}_{\bT}(\rvz_t;\lambda_t)} - \colorE{\bepsilon}
||_2^2
\right]\label{eq:l_simple}
\end{align}
Since $||\rvs_{\bT}(\rvz_t, \lambda_t) - \nabla_{\rvx_t} \log q(\rvz_t | \rvx)
||_2^2 = \sigma_\lambda^{-2} || 
\hat{\bepsilon}_{\bT}(\rvz_t;\lambda_t) - \bepsilon
||_2^2$, this is simply a version of the denoising score matching objective above, but where $\tilde{w}(t) = \sigma_t^2$. \cite{ho2020denoising} showed that this noise-prediction objective can result in high-quality samples. \cite{dhariwal2021diffusion} later improved upon these results by switching from a `linear' to a `cosine' noise schedule $\lambda_t$ (see Figure \ref{fig:noise_schedules}). This noise-prediction loss with the cosine schedule is currently broadly used.

\paragraph{The ELBO objective.}

It was shown by \citep{kingma2021variational,song2021maximum} that the evidence lower bound (ELBO) of continuous-time diffusion models simplifies to:
\begin{align}
-\text{ELBO}(\rvx) &=
\frac{1}{2} 
\E_{t \sim \mathcal{U}(0,1), \bepsilon \sim \mathcal{N}(0,\mathbf{I})} 
\left[
-\colorA{\frac{d\lambda}{dt}} \cdot || 
\colorC{\hat{\bepsilon}_{\bT}(\rvz_t;\lambda_t)} - \colorE{\bepsilon}
||_2^2
\right]
+ c
\label{eq:l_elbo}
\end{align}
where $c$ is constant w.r.t. the score network parameters.

\subsection{The weighted loss}
\label{sec:weighted_loss}

\begin{table}[t]
\caption{Diffusion model objectives in the literature are special cases of the weighted loss with a weighting function $w(\lambda)$ given in this table. See Section \ref{sec:weighted_loss} and Appendix \ref{sec:loss_functions} for more details and derivations. Most existing weighting functions are non-monotonic, except for the ELBO objective and the $\rvv$-prediction objective with `cosine' schedule.
}
\label{table:w}
\scriptsize
\begin{center}
{\renewcommand{\arraystretch}{1.2}%
\begin{tabular}{ lll }
\toprule
\textbf{Loss function} & \textbf{Implied weighting} $w(\lambda)$
& \textbf{Monotonic?}
\\ 
\midrule
ELBO \citep{kingma2021variational,song2021maximum} & 1 & \checkmark\\
IDDPM ($\bepsilon$-prediction with 'cosine' schedule) \citep{nichol2021improved}& $\text{sech}(\lambda/2)$\\ 
EDM \citep{karras2022elucidating} (Appendix \ref{sec:edm})& $\mathcal{N}(\lambda; 2.4, 2.4^2) \cdot (e^{-\lambda} + 0.5^2)$\\ 
$\rvv$-prediction with `cosine'  schedule \citep{salimans2022progressive} (Appendix \ref{sec:vparam}) 
& $e^{-\lambda/2}$ & \checkmark\\ 
Flow Matching with OT path (FM-OT) \citep{lipman2022flow} (Appendix \ref{sec:fm})
& $e^{-\lambda/2}$ & \checkmark
\\
InDI \citep{delbracio2023inversion} (Appendix \ref{sec:indi}) & $e^{-\lambda} \text{sech}^2(\lambda/4)$ & \checkmark
\\
P2 weighting with `cosine' schedule~\citep{choi2022perception} (Appendix \ref{sec:p2-weighting}) & $\text{sech}(\lambda/2) / (1 + e^\lambda)^\gamma$, $\gamma = 0.5$ or $1$ & \\
Min-SNR-$\gamma$~\citep{hang2023efficient} (Appendix \ref{sec:min-snr}) & $\text{sech}(\lambda/2) \cdot \min(1, \gamma e^{-\lambda})$ \\
\bottomrule
\end{tabular}
}
\end{center}
\end{table}

\begin{figure}[t]
    \centering
    \includegraphics[width=1.0\textwidth]{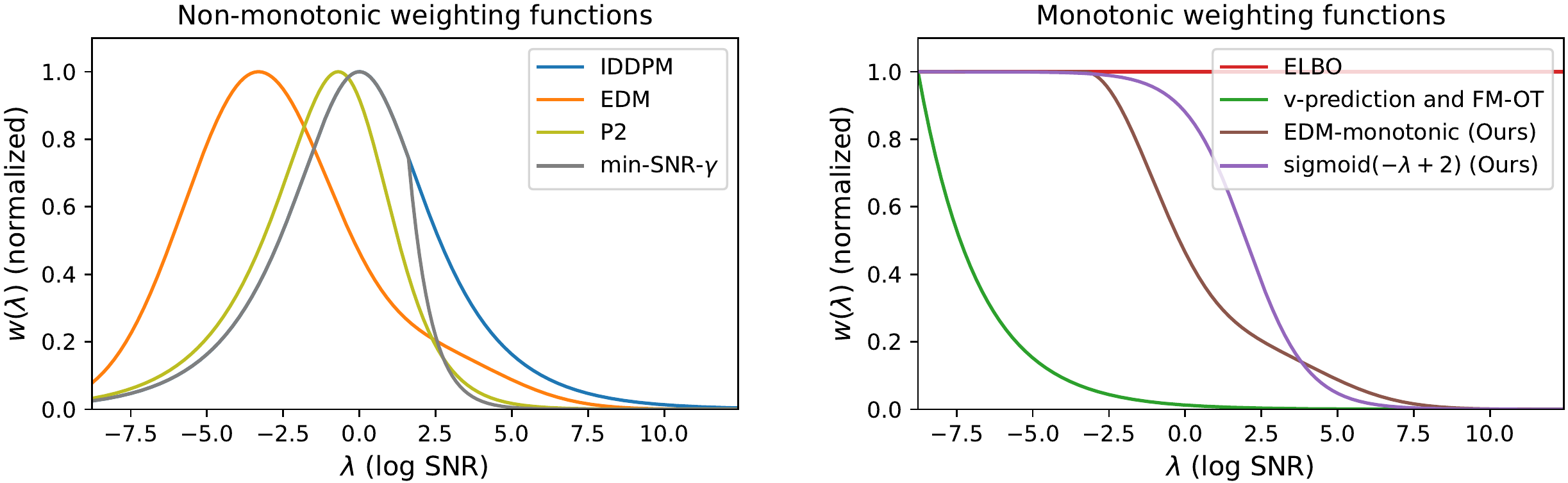}
    \caption{Diffusion model objectives in the literature are special cases of the weighted loss with non-monotonic (left) or monotonic (right) weighting functions. Each weighting function is scaled such that the maximum is 1 over the plotted range. See Table \ref{table:w} and Appendix \ref{sec:loss_functions}.}
    \label{fig:weighting_functions}
\end{figure}

The objective functions used in practice, including the ones above, can be viewed as special cases of a \emph{weighted loss} introduced by \cite{kingma2021variational}%
\footnote{More specifically, ~\cite{kingma2021variational} expressed the weighted diffusion loss in terms of $\rvx$-prediction, which is equivalent to the expression above due to the relationship $\int ||\bepsilon - \hat{\bepsilon}_{\bT}(\rvz_t;\lambda)||_2^2 \,d\lambda = \int ||\rvx - \hat{\rvx}_{\bT}(\rvz_t;\lambda)||_2^2 e^\lambda d\lambda = \int ||\rvx - \hat{\rvx}_{\bT}(\rvz_t;\lambda)||_2^2 \,de^\lambda$, where $e^\lambda$ equals the signal-to-noise ratio (SNR).}%
 with a particular choice of weighting function $w(\lambda_t)$:
\begin{empheq}[box={\mymath}]{align}
\mathcal{L}_w(\rvx) 
&=
\frac{1}{2} 
\E_{t \sim \mathcal{U}(0,1), \bepsilon \sim \mathcal{N}(0,\mathbf{I})} 
\left[
\colorD{w(\lambda_t)}
\cdot
-\colorA{\frac{d\lambda}{dt}}
\cdot || 
\colorC{\hat{\bepsilon}_{\bT}(\rvz_t;\lambda_t)} - \colorE{\bepsilon}
||_2^2
\right]
\label{eq:lw_new}
\end{empheq}
See Appendix \ref{sec:loss_functions} for a derivation of the implied weighting functions for all popular diffusion losses. Results are compiled in Table \ref{table:w}, and visualized in Figure \ref{fig:weighting_functions}. 

The ELBO objective (\Eqref{eq:l_elbo}) corresponds to uniform weighting, i.e. $w(\lambda_t)=1$. 

The noise-prediction objective (\Eqref{eq:l_simple}) corresponds to $\mathcal{L}_w(\rvx)$ with $w(\lambda_t) = -1/(d\lambda/dt)$. This is more compactly expressed as $w(\lambda_t) = p(\lambda_t)$, i.e., the PDF of the implied distribution over noise levels $\lambda$ during training. Typically, the noise-prediction objective is used with the cosine schedule $\lambda_t$, which implies $w(\lambda_t) \propto \text{sech}(\lambda_t/2)$. See Section \ref{sec:schedules} for the expression of $\pl(\lambda_t)$ for various noise schedules.

\subsection{Invariance of the weighted loss to the noise schedule $\lambda_t$}
\label{sec:invariance}

In \cite{kingma2021variational}, it was shown that the ELBO objective (\Eqref{eq:l_elbo}) is invariant to the choice of noise schedule, except for its endpoints $\lmin$ and $\lmax$. This result can be extended to the much more general weighted diffusion loss of \Eqref{eq:lw_new}, since with a change of variables from $t$ to $\lambda$, it can be rewritten to:
\begin{align}
\mathcal{L}_w(\rvx) 
=
\frac{1}{2} 
\int_{\colorA{\lmin}}^{\colorA{\lmax}}
\colorD{w(\lambda)}
\E_{\bepsilon \sim \mathcal{N}(0,\mathbf{I})} 
\left[
|| 
\colorC{\hat{\bepsilon}_{\bT}(\rvz_\lambda;\lambda)} - \colorE{\bepsilon}
||_2^2
\right]
\,\colorA{d\lambda}
\label{eq:lw_integral}
\end{align}
Note that this integral does not depend on the noise schedule $\fl$ (the mapping between $t$ and $\lambda$), except for its endpoints $\lmax$ and $\lmin$. This is important, since it means that the shape of the function $\fl$ between $\lmin$ and $\lmax$ does not affect the loss; only the weighting function $w(\lambda)$ does. Given a chosen weighting function $w(\lambda)$, the loss is invariant to the noise schedule $\lambda_t$ between $t=0$ and $t=1$. The only real difference between diffusion objectives is their difference in weighting $w(\lambda)$.

This invariance does \emph{not} hold for the Monte Carlo estimator of the loss that we use in training, based on random samples $t \sim \mathcal{U}(0,1), \bepsilon \sim \mathcal{N}(0,\mathbf{I})$. The noise schedule still affects the \emph{variance} of this Monte Carlo estimator and its gradients; therefore, the noise schedule affects the efficiency of optimization. 
More specifically, we'll show that the noise schedule acts as an importance sampling distribution for estimating the loss integral of \Eqref{eq:lw_integral}. 
Note that $p(\lambda) = -1/(d\lambda/dt)$. We can therefore rewrite the weighted loss as the following, which clarifies the role of $p(\lambda)$ as an importance sampling distribution:
\begin{empheq}[box={\mymath}]{align}
\mathcal{L}_w(\rvx) = 
\frac{1}{2} 
\E_{\bepsilon \sim \mathcal{N}(0,\mathbf{I}), \lambda \sim p(\lambda)} 
\left[
\frac{\colorD{w(\lambda)}}
{\colorA{p(\lambda)}}
|| 
\colorC{\hat{\bepsilon}_{\bT}(\rvz_\lambda;\lambda)} - \colorE{\bepsilon}
||_2^2
\right]
\end{empheq}
Using this insight, we implemented an adaptive noise schedule, detailed in Appendix \ref{sec:adaptive}. We find that by lowering the variance of the loss estimator, this often significantly speeds op optimization.

\section{The weighted loss as the ELBO with data augmentation}
\label{sec:mainresult}

We prove the following main result:

\begin{mdframed}
\begin{theorem}
If the weighting $w(\lambda_t)$ is monotonic, then the weighted diffusion objective of \Eqref{eq:lw_new} is equivalent to the ELBO with data augmentation (additive noise).
\label{theorem1}
\end{theorem}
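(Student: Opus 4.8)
\emph{Proof strategy.} The plan is to first make precise what ``the ELBO with additive-noise data augmentation'' means, and then match it to the weighted loss via a layer-cake decomposition of the monotone weighting. The augmentation of a datapoint $\rvx$ is: draw a noise level $\lambda'\sim p_{\text{aug}}$ on $[\lmin,\lmax]$ and replace $\rvx$ by $\rvz_{\lambda'}=\alpha_{\lambda'}\rvx+\sigma_{\lambda'}\bepsilon'$ with $\bepsilon'\sim\mathcal{N}(0,\bfI)$, i.e., ordinary Gaussian perturbation. As in DistAug~\citep{child2019generating}, the generative model is conditioned on $\lambda'$ at training time and on the ``no augmentation'' value $\lambda'=\lmax$ at sampling time, which recovers $p(\rvz_{\lmax})\approx p(\rvx)$. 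The conditional model for an augmented datapoint at level $\lambda'$ is the \emph{same} continuous-time diffusion --- same score network $\hat{\bepsilon}_{\bT}$ and same forward process --- but with its top noise level lowered from $\lmax$ to $\lambda'$.

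For a fixed $\lambda'$ I would evaluate this conditional ELBO. Because the forward process has self-consistent Gaussian marginals, further noising $\rvz_{\lambda'}$ through levels $\lambda<\lambda'$ is distributionally identical to noising $\rvx$ directly; and by the denoising-score-matching identity~\citep{vincent2011connection}, once averaged over the data the per-level conditional-score-matching terms collapse to the \emph{marginal}-score-matching terms $\E_{q_\lambda}\|\snT(\rvz;\lambda)-\nabla_{\rvz}\log q_\lambda(\rvz)\|_2^2$, i.e., exactly the terms that appear for the untruncated model. Applying the continuous-time simplification of~\citep{kingma2021variational,song2021maximum} (the analogue of \Eqref{eq:l_elbo} with upper limit $\lambda'$ instead of $\lmax$) then gives
\begin{align*}
\E_{q(\rvx)}\!\left[-\text{ELBO}(\rvz_{\lambda'}\mid\lambda')\right]
&= \tfrac{1}{2}\,\E_{q(\rvx)}\!\int_{\lmin}^{\lambda'}\E_{\bepsilon}\!\left[\,\|\hat{\bepsilon}_{\bT}(\rvz_\lambda;\lambda)-\bepsilon\|_2^2\,\right]d\lambda \;+\; c(\lambda'),
\end{align*}
with $\rvz_\lambda=\alpha_\lambda\rvx+\sigma_\lambda\bepsilon$, where $c(\lambda')$ gathers the prior term at $\lmin$ and the reconstruction/endpoint term at $\lambda'$, both independent of $\bT$ in the continuous-time limit.

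Now I would assemble these over $\lambda'$. A non-increasing $w$ on $[\lmin,\lmax]$ is the survival function of a unique positive measure $\nu$ (namely $\nu([\lambda,\lmax])=w(\lambda)$) of total mass $w(\lmin)$, with density $-w'$ on $(\lmin,\lmax)$ and an atom of mass $w(\lmax)$ at $\lmax$; set $p_{\text{aug}}:=\nu/w(\lmin)$, whose atom at $\lmax$ is exactly the ``no augmentation'' event. Writing $g(\lambda):=\E_{q(\rvx)}\E_{\bepsilon}\|\hat{\bepsilon}_{\bT}(\rvz_\lambda;\lambda)-\bepsilon\|_2^2$ and using $w(\lambda)=\nu([\lambda,\lmax])$, Fubini gives
\begin{align*}
\E_{q(\rvx)}[\mathcal{L}_w(\rvx)]
&= \tfrac{1}{2}\int_{\lmin}^{\lmax} w(\lambda)\,g(\lambda)\,d\lambda \\
&= \int_{[\lmin,\lmax]}\Big(\tfrac{1}{2}\int_{\lmin}^{\lambda'} g(\lambda)\,d\lambda\Big)\,\nu(d\lambda') \\
&= w(\lmin)\Big(\E_{\lambda'\sim p_{\text{aug}}}\big[\,\E_{q(\rvx)}[-\text{ELBO}(\rvz_{\lambda'}\mid\lambda')]\,\big]-\bar c\Big),
\end{align*}
where $\bar c:=\E_{\lambda'\sim p_{\text{aug}}}[c(\lambda')]$ does not depend on $\bT$. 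Hence the data-averaged weighted diffusion objective equals, up to the positive factor $w(\lmin)$ and a $\bT$-independent constant, the ELBO of the Gaussian-noise-augmented data, as claimed. (Every monotone entry of Table~\ref{table:w} is non-increasing in $\lambda$; a monotonically increasing weighting would instead correspond to augmenting at the opposite endpoint.)

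I expect the real obstacle to be the second step: defining ``the ELBO of the perturbed datapoint'' precisely as the parameter-sharing truncated diffusion, and verifying that its per-noise-level terms match those of the full model. For an individual $\rvx$ the relevant target (the score of $q(\rvz_\lambda\mid\rvz_{\lambda'})$) is \emph{not} the score of $q(\rvz_\lambda\mid\rvx)$; only after averaging over $q(\rvx)$, via the denoising-score-matching identity, do both reduce to marginal-score matching against $q_\lambda$ so that the integrands coincide. One must also check that the residual endpoint terms (prior at $\lmin$, reconstruction at $\lambda'$) are genuinely $\bT$-independent in the continuous-time limit, so they can be absorbed into $c(\lambda')$. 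Given these, the layer-cake/Fubini assembly and the reading-off of $p_{\text{aug}}$ from $w$ (atom at $\lmax$ included) are routine.
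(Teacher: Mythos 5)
Your proposal is correct in substance and shares the paper's skeleton, but it reaches the key identification by a different route. The assembly step is the same mathematics in different clothes: your layer-cake measure $\nu$ with density $-w'$ and an atom $w(\lmax)$ at $\lambda=\lmax$ is exactly the paper's integration by parts (Appendix \ref{sec:intbyparts}), whose boundary term $w(\lmax)\,\Lt{0}$ is your atom and whose $-w(\lmin)\Lt{1}$ term is absorbed into your $\bar c$; your $p_{\text{aug}}$ is the paper's $p_w(t)$ read in $\lambda$-coordinates. Where you genuinely differ is in how the per-level quantity is recognized as an ELBO. The paper never writes down the truncated-model ELBO directly: it works with $\Lt{t}=\Ltfull{t,\dots,1}$, computes its time derivative (\Eqref{eq:time_deriv}, via Appendix E of VDM, everything conditioned on $\rvx$), and then identifies $\Lt{t}$ with the expected negative ELBO of $\rvz_t$ plus the entropy $\mathcal{H}(q(\rvz_t|\rvx))$ by the chain rule for KL and the Markov property of the forward process (Appendix \ref{sec:elbo}); this makes the identity exact \emph{per datapoint} and entirely sidesteps the conditional-vs-marginal score issue you flag. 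You instead define the parameter-sharing truncated diffusion for the augmented datum $\rvz_{\lambda'}$, write its continuous-time ELBO, and match integrands to the weighted loss only \emph{after} averaging over $q(\rvx)$, using the self-consistency of the Gaussian marginals plus Vincent's denoising-score-matching identity. That route is valid — and the change-of-variables bookkeeping you gloss over does work out, since the noise scale and log-SNR Jacobian relative to $\rvz_{\lambda'}$ satisfy $\tilde{\sigma}^2\,d\tilde{\lambda}=\sigma_\lambda^2\,d\lambda$, so the truncated ELBO's diffusion term is $\tfrac12\int_{\lmin}^{\lambda'}\sigma_\lambda^2\,\E\|\snT-\nabla\log q(\rvz_\lambda|\rvz_{\lambda'})\|_2^2\,d\lambda$, which DSM converts to your $g(\lambda)$ up to $\bT$-independent constants — but it only establishes the equivalence at the level of the training objective $\E_{q(\rvx)}[\mathcal{L}_w(\rvx)]$, whereas the paper's statement holds for each $\rvx$. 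In exchange, your construction makes the augmentation distribution $p_{\text{aug}}$, its ``no augmentation'' atom, and the DistAug reading fully explicit, which the paper only sketches after the fact; the paper's argument is shorter, per-example, and needs no score-matching identity.
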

\end{mdframed}
With monotonic $w(\lambda_t)$ we mean that $w$ is a monotonically increasing function of $t$, and therefore a monotonically decreasing function of $\lambda$.

We'll use shorthand notation $\Lt{t}$ for the KL divergence between the joint distributions of the forward process $q(\rvz_{t,...1}|\rvx)$ and the reverse model $p(\rvz_{t,...1})$, for the subset of timesteps from $t$ to $1$:
\begin{empheq}[box={\mymath}]{equation}
\colorB{\Lt{t}} := \colorB{\Ltfull{t,...,1}}
\end{empheq}
In Appendix \ref{app:time_derivative_of_dlt}, we prove that\footnote{Interestingly, this reveals a relationship with the Fisher divergence: $\frac{d}{d\lambda} \, \Ltfull{t,...,1}
= \frac{1}{2} \sigma_\lambda^2 D_{F}(q(\rvz_t|\rvx)||p(\rvz_t))$.
See Appendix \ref{sec:fisher} for a derivation and a comparison with a similar result by~\cite{lyu2012interpretation}.}:
\begin{align}
\colorB{
\frac{d}{dt} \Lt{t}}
=
\colorC{
\frac{1}{2} 
\frac{d\lambda}{dt}
\E_{\bepsilon \sim \mathcal{N}(0, \mathbf{I})}\left[
||\bepsilon - \hat{\bepsilon}_{\bT}(\rvz_\lambda;\lambda)||_2^2
\right]
}
\label{eq:time_deriv}
\end{align}
As shown in Appendix \ref{app:time_derivative_of_dlt}, this allows us to rewrite the weighted loss of \Eqref{eq:lw_new} as simply:
\begin{align}
\mathcal{L}_w(\rvx) 
&=
-
\int_0^1
\colorB{
\frac{d}{dt} \Lt{t}}
\, \colorD{w(\lambda_t)}
\;dt
\end{align}
In Appendix \ref{sec:intbyparts}, we prove that using integration by parts, the weighted loss can then be rewritten as:
\begin{align}
\mathcal{L}_w(\rvx) 
=&
\int_0^1
\colorD{\frac{d}{dt} w(\lambda_t)}
\,\colorB{\Lt{t}}
\;dt
+ \colorD{w(\lmax)} \,\colorB{\Lt{0}}
+ \text{constant}
\label{eq:mainresult}
\end{align}
Now, assume that $w(\lambda_t)$ is a monotonically increasing function of $t \in [0,1]$. Also, without loss of generality, assume that $w(\lambda_t)$ is normalized such that $w(\lambda_1)=1$. We can then further simplify to an expected KL divergence:
\begin{empheq}[box={\mymath[]}]{equation}
\mathcal{L}_w(\rvx)
=
\E_{\colorD{p_w(t)}} 
\left[
\colorB{
\Lt{t}}
\right]
+ \text{constant}
\label{eq:mainresult2}
\end{empheq}
where $p_w(t)$ is a probability distribution determined by the weighting function, namely $p_w(t) := (d/dt\;w(\lambda_t))$, with support on $t \in [0, 1]$. The probability distribution $p_w(t)$ has Dirac delta peak of typically very small mass $w(\lmax)$ at $t=0$. 

Note that:
\begin{align}
\Lt{t} &= \Ltfull{t,...,1}\\
&\geq D_{KL}(q(\rvz_t|\rvx)||p(\rvz_t)) = - \E_{q(\rvz_t|\rvx)}[ \log p(\rvz_t)] + \text{constant.}
\end{align}
More specifically, $\Lt{t}$ equals the expected negative ELBO of noise-perturbed data, plus a constant; see Section \ref{sec:elbo} for a detailed derivation.

This concludes our proof of Theorem \ref{theorem1}.
\renewcommand{\qedsymbol}{$\blacksquare$}
\qed

This result provides us the new insight that any of the objectives with (implied) monotonic weighting, as listed in Table \ref{table:w}, can be understood as equivalent to the ELBO with simple data augmentation, namely additive noise. Specifically, this is a form of Distribution Augmentation (DistAug), where the model is conditioned on the augmentation indicator during training, and conditioned on 'no augmentation' during sampling.

Monotonicity of $w(\lambda)$ holds for a number of models with state-of-the-art perceptual quality, including VoiceBox for speech generation~\citep{le2023voicebox}, and Simple Diffusion for image generation~\citep{hoogeboom2023simple}.

\section{Experiments}
\label{sec:experiments}

Inspired by the theoretical results in Section \ref{sec:mainresult}, in this section we proposed several monotonic weighting functions, and conducted experiments to test the effectiveness of the monotonic weighting functions compared to baseline (non-monotonic) weighting functions. In addition, we test the adaptive noise schedule (Section \ref{sec:invariance}). For brevity we had to pick a name for our models. 
Since we build on earlier results on Variational Diffusion Models (VDM)~\citep{kingma2021variational}, and our objective is shown to be equivalent to the VDM objective combined with data augmentation, we name our models \emph{VDM++}.

\subsection{ImageNet 64x64}
\label{sec:imagenet64}

\begin{table}[t]
\scriptsize
\centering
\caption{ImageNet 64x64 results. See Section \ref{sec:imagenet64}.}
\label{table:imagenet64}
{\renewcommand{\arraystretch}{1.2}%
\setlength\tabcolsep{2.5pt}
\begin{tabular}{ lllccccc }
\toprule
&
& 
&
& \multicolumn{2}{c}{DDPM sampler} 
& \multicolumn{2}{c}{EDM sampler} 
\\
\textbf{Model parameterization}
& \textbf{Training noise schedule}
& \textbf{Weighting function} 
& \textbf{Monotonic?}
& \textbf{FID} $\downarrow$
& \textbf{IS} $\uparrow$
& \textbf{FID} $\downarrow$
& \textbf{IS} $\uparrow$
\\ 
\midrule
$\bepsilon$-prediction %
& Cosine
& $\text{sech}(\lambda/2)$ (Baseline)
&
& 1.85
& 54.1 $\pm$ 0.79
& 1.55
& 59.2 $\pm$ 0.78
\\
"%
& Cosine
& $\text{sigmoid}(-\lambda + 1)$
& \checkmark
& 1.75
& 55.3 $\pm$ 1.23
& 
&
\\
"
& Cosine
& $\text{sigmoid}(-\lambda + 2)$
& \checkmark
& \textbf{1.68}
& \textbf{56.8} $\pm$ 0.85
& 1.46
& 60.4 $\pm$ 0.86
\\
"
& Cosine
& $\text{sigmoid}(-\lambda + 3)$
& \checkmark
& 1.73
& 56.1 $\pm$ 1.36
&
& 
\\
"
& Cosine
& $\text{sigmoid}(-\lambda + 4)$
& \checkmark
& 1.80
& 55.1 $\pm$ 1.65
& 
&
\\
"
& Cosine
& $\text{sigmoid}(-\lambda + 5)$
& \checkmark
& 1.94
& 53.5 $\pm$ 1.12
& 
&
\\
"
& Adaptive
& $\text{sigmoid}(-\lambda + 2)$
& \checkmark
& 1.70
& 54.8 $\pm$ 1.20
& \textbf{1.44}
& 60.6 $\pm$ 1.44
\\
"
& Adaptive
& EDM-monotonic
& \checkmark
& \textbf{1.67}
& \textbf{56.8} $\pm$ 0.90
& \textbf{1.44}
& \textbf{61.1} $\pm$ 1.80
\\
\midrule
EDM \citep{karras2022elucidating}
& EDM (training)
& EDM (Baseline)
& 
& & & 1.36 &
\\
EDM (our reproduction)
& EDM (training)
& EDM (Baseline)
& 
& & & 1.45
& 60.7 $\pm$ 1.19
\\
"
& Adaptive
& EDM
& 
& & & \textbf{1.43}
& 63.2 $\pm$ 1.76 
\\
"
& Adaptive
& $\text{sigmoid}(-\lambda + 2)$
& \checkmark
& & & 1.55
& \textbf{63.7} $\pm$ 1.14
\\
"
& Adaptive
& EDM-monotonic
& \checkmark
& & & \textbf{1.43}
& \textbf{63.7} $\pm$ 1.48 
\\

\midrule
$\rvv$-prediction %
& Adaptive
& $\exp(-\lambda/2)$ (Baseline)
& \checkmark
& & & 1.62
& 58.0 $\pm$ 1.56
\\
"%
& Adaptive
& $\text{sigmoid}(-\lambda + 2)$
& \checkmark
& & & 1.51
& 64.4 $\pm$ 1.28
\\
"
& Adaptive
& EDM-monotonic
& \checkmark
& & & \textbf{1.45}
& \textbf{64.6} $\pm$ 1.35
\\
\bottomrule
\end{tabular}
}
\end{table}

All experiments on ImageNet 64x64 were done with the U-Net diffusion model architecture from \citep{nichol2021improved}. Table \ref{table:imagenet64} summarizes the FID~\citep{heusel2017gans} and Inception scores~\citep{salimans2016improved} across different settings.

We started with the $\bepsilon$-prediction model, and the $\bepsilon$-prediction loss with cosine noise schedule, which is the most popular setting in the literature and therefore serves as a reasonable baseline. This corresponds to a non-monotonic $\text{sech}(\lambda/2)$ weighting. We replaced this weighting with a monotonic weighting, specifically a sigmoidal weighting of the form $\text{sigmoid}(-\lambda + k)$, with $k \in \{1,2,3,4,5\}$. We observed an improvement in FID and Inception scores in all experiments except for $k = 5$, with the best FID and Inception scores at $k=2$. See Figure \ref{fig:weighting_functions} for visualization of the sigmoidal weighting with $k=2$. These initial experiments were performed with the DDPM sampler from \cite{ho2020denoising}, but scores improved further by switching to the EDM sampler from \cite{karras2022elucidating}, so we used the EDM sampler for all subsequent experiments. For fair comparison, we fixed the noise schedule used in sampling to the cosine schedule for the DDPM sampler, and the EDM (sampling) schedule for the EDM sampler (see Table \ref{table:noise_schedules} for the formulations). Then, we changed the training noise schedule from the cosine schedule to our adaptive schedule (Appendix \ref{sec:adaptive_main}), resulting in similar scores, but with faster training. We also proposed another monotonic weighting `EDM-monotonic', introduced in the next paragraph, that was inspired by \citet{karras2022elucidating}. It performs slightly better than the best sigmoidal weighting for the $\bepsilon$-prediction model. 

Next, we trained models using the EDM parameterization from \cite{karras2022elucidating}, which reported the best FIDs in the literature for ImageNet 64x64. Our re-implementation could not exactly reproduce their reported FID number (1.36), but comes close (1.45). We observed a slight improvement in scores by switching from the EDM training noise schedule to the adaptive noise schedule. We found that replacing the non-monotonic EDM weighting with the best-performing (monotonic) sigmoidal weighting from the previous paragraph, resulted in a slightly worse FID and a slightly better inception score. Inspired by the EDM weighting function from~\citet{karras2022elucidating} (non-monotonic, Table \ref{table:w}), we trained a model using a weighting function indicated with `EDM-monotonic', which is identical to the EDM weighting, except that it is made monotonic by letting $w(\lambda) = \max_{\lambda} \tilde{w}(\lambda)$ for $\lambda < \argmax_{\lambda} \tilde{w}(\lambda)$, where $\tilde{w}(\lambda)$ indicates the original EDM weighting function. Hence, `EDM-monotonic' is identical to the EDM weighting to the right of its peak, but remains as a constant to left of its peak. This monotonic weighting function leads to scores on par with the original EDM weighting. Interestingly, we didn't get significantly better scores with the EDM parameterization than with the $\bepsilon$-prediction parameterization in the previous paragraph when combined with monotonic weightings.

We also trained a $\rvv$-prediction model with original weighting, sigmoidal weighting, and `EDM-monotonic' weighting. Similar to the observation on $\bepsilon$-prediction and EDM parameterized models, sigmoidal weighting and `EDM-monotonic' weighting worked slightly better than the original weighting.

\subsection{High resolution ImageNet} \label{sec:imagenet_big}

\begin{table}[t]
\caption{ImageNet 128x128 results. The first line corresponds to \emph{Simple Diffusion} model from \citet{hoogeboom2023simple} that serves as the baseline. We only changed the training noise schedule and weighting function; see Section \ref{sec:imagenet_big}.}
\label{table:imagenet128}
\scriptsize
\centering
{\renewcommand{\arraystretch}{1.2}%
\begin{tabular}{ lllcccc }
\toprule
& 
& 
& 
& \multicolumn{2}{c}{\textbf{FID} $\downarrow$}
& 
\\ 
\textbf{Model parameterization}
& \textbf{Training noise schedule}
& \textbf{Weighting function} 
& \textbf{Monotonic?}
& \textbf{train}
& \textbf{eval}
& \textbf{IS} $\uparrow$
\\ 
\midrule
$\rvv$-prediction 
& Cosine-shifted
& $\exp(-\lambda/2)$ (Baseline)
& \checkmark
& 1.91
& 3.23
& 171.9 $\pm$ 2.46 
\\
"
& Adaptive
& $\text{sigmoid}(-\lambda + 2)$-shifted
& \checkmark
& 1.91
& 3.41
& \textbf{183.1} $\pm$ 2.20
\\
"%
& Adaptive
& EDM-monotonic-shifted
& \checkmark
& \textbf{1.75}
& \textbf{2.88}
& 171.1 $\pm$ 2.67
\\

\bottomrule
\end{tabular}
}
\end{table}

In our final experiments, we tested whether the weighting functions that resulted in the best scores on ImageNet 64$\times$64, namely $\text{sigmoid}(-\lambda + 2)$ and `EDM-monotonic', also results in competitive scores on high-resolution generation. As baseline we use the \emph{Simple Diffusion} model from \cite{hoogeboom2023simple}, which reported the best FID scores to date on high-resolution ImageNet without sampling modifications (e.g. guidance).

We recruited the large U-ViT model from Simple Diffusion~\citep{hoogeboom2023simple}, and changed the training noise schedule and weighting function to our proposed ones. See Table 3 for the comparison to the baseline. Note that for higher-resolution models, \citet{hoogeboom2023simple} proposed a shifted version of the cosine noise schedule (Table \ref{table:noise_schedules}), that leads to a shifted version of the weighting function $w(\lambda)$. Similarly, we extended our proposed sigmoidal and `EDM-monotonic' weightings to their shifted versions (see Appendix \ref{sec:shifted_schedule} for details). For fair comparison, we adopted the same vanilla DDPM sampler as Simple Diffusion, without guidance or other advanced sampling techniques such as second-order sampler or rejection sampling. As shown in Table \ref{table:imagenet128}, with our adaptive noise schedule for training, the two weighting functions we proposed led to either better or comparable FID and IS scores on ImageNet 128$\times$128, compared to the baseline Simple Diffusion approach. 

\begin{table}[t]
\caption{Comparison to approaches in the literature for high-resolution ImageNet generation. `With guidance' indicates that the method was combined with classifier-free guidance~\citep{ho2022classifier}. $^\dagger$ Models under 'Latent diffusion with pretrained VAE' use the pre-trained VAE from Stable Diffusion~\citep{rombach2022high}, which used a much larger training corpus than the other models in this table.}
\label{table:imagenet128_comparison}
\scriptsize
\centering
{\renewcommand{\arraystretch}{1.2}%
\setlength\tabcolsep{4pt}
\begin{tabular}{ l|c|rrl|rrl }
\toprule
&
& \multicolumn{3}{l}{\textbf{Without guidance}}
& \multicolumn{3}{l}{\textbf{With guidance}}
\\
&
& \multicolumn{2}{c}{\textbf{FID} $\downarrow$}
& 
& \multicolumn{2}{c}{\textbf{FID} $\downarrow$}
& 
\\ 
\textbf{Method}
& 
& \textbf{train}
& \textbf{eval}
& \textbf{IS} $\uparrow$
& \textbf{train}
& \textbf{eval}
& \textbf{IS} $\uparrow$
\\ 
\midrule
\textbf{128 $\times$ 128 resolution} &&&\\
ADM~\citep{dhariwal2021diffusion}
&& 5.91 & &
& 2.97 & &
\\
CDM~\citep{ho2021cascaded}
&& 3.52
& 3.76
& 128.8 $\pm$ 2.5
\\
RIN~\citep{jabri2022scalable}
&& 2.75
&
& 144.1
\\
Simple Diffusion (U-Net)~\citep{hoogeboom2023simple}
&& 2.26
& \textbf{2.88}
& 137.3 $\pm$ 2.0
\\
Simple Diffusion (U-ViT, L)~\citep{hoogeboom2023simple}
&& 1.91
& 3.23
& 171.9 $\pm$ 2.5
& 2.05
& 3.57
& 189.9 $\pm$ 3.5
\\
\textbf{VDM++ (Ours)}, $w(\lambda)=\text{sigmoid}(-\lambda + 2)$
&& 1.91
& 3.41
& \textbf{183.1} $\pm$ 2.2
\\
\textbf{VDM++ (Ours)},  EDM-monotonic weighting
&& \textbf{1.75}
& \textbf{2.88}
& 171.1 $\pm$ 2.7
& \textbf{1.78}
& \textbf{3.16}
& \textbf{190.5} $\pm$ 2.3
\\
\midrule
\textbf{256 $\times$ 256 resolution} &&&\\
BigGAN-deep (no truncation)~\citep{brock2018large}
&& 6.9
& 
& 171.4 $\pm$ 2.0
\\
MaskGIT~\citep{chang2022maskgit} 
&& 6.18
& 
& 182.1
\\
ADM~\citep{dhariwal2021diffusion}
&& 10.94
&
&
& 3.94
& 
& 215.9
\\
CDM~\citep{ho2021cascaded}
&& 4.88
& 4.63
& 158.7 $\pm$ 2.3
\\
RIN~\citep{jabri2022scalable}
&& 3.42
& 
& 182.0
\\
Simple Diffusion (U-Net)~\citep{hoogeboom2023simple}
&& 3.76 
& 3.71
& 171.6 $\pm$ 3.1
\\
Simple Diffusion (U-ViT, L)~\citep{hoogeboom2023simple}
&& 2.77
& 3.75
& 211.8 $\pm$ 2.9
& 2.44
& 4.08
& 256.3 $\pm$ 5.0
\\
\textbf{VDM++ (Ours)}, EDM-monotonic weighting
&& \textbf{2.40}
& \textbf{3.36}
& \textbf{225.3} $\pm$ 3.2
& \textbf{2.12}
& \textbf{3.69}
& \textbf{267.7} $\pm$ 4.9
\\
\arrayrulecolor{gray}\midrule[0.1pt]\arrayrulecolor{black}
\textit{Latent diffusion with pretrained VAE:}
\\
DiT-XL/2~\citep{peebles2022scalable}
&& 9.62
&
& 121.5
& 2.27
&
& 278.2
\\
U-ViT~\citep{bao2023all}
&&
&
&
& 3.40
& 
& 
\\
Min-SNR-$\gamma$~\citep{hang2023efficient}
&&
&
&
& 2.06
&
& 
\\
MDT~\citep{gao2023masked}
&& 6.23
&
& 143.0
& 1.79
& 
& 283.0
\\
\midrule
\textbf{512 $\times$ 512 resolution} &&&\\
MaskGIT~\citep{chang2022maskgit}
&& 7.32
&
& 156.0
\\
ADM~\citep{dhariwal2021diffusion}
&& 23.24
& 
& 
& 3.85
& 
& 221.7
\\
RIN~\citep{jabri2022scalable}
&&
&
&
& 3.95
& 
& 216.0
\\
Simple Diffusion (U-Net)~\citep{hoogeboom2023simple}
&& 4.30
& 4.28
& 171.0 $\pm$ 3.0
\\
Simple Diffusion (U-ViT, L)~\citep{hoogeboom2023simple}
&& 3.54
& 4.53
& 205.3 $\pm$ 2.7
& 3.02
& 4.60
& 248.7 $\pm$ 3.4
\\
\textbf{VDM++ (Ours)}, EDM-monotonic weighting
&& \textbf{2.99}
& \textbf{4.09}
& \textbf{232.2} $\pm$ 4.2
& \textbf{2.65}
& \textbf{4.43}
& \textbf{278.1} $\pm$ 5.5
\\
\arrayrulecolor{gray}\midrule[0.1pt]\arrayrulecolor{black}
\textit{Latent diffusion with pretrained VAE:}
\\
DiT-XL/2~\citep{peebles2022scalable}
&& 12.03
& 
& 105.3
& 3.04
& 
& 240.8
\\
LDM-4~\citep{rombach2022high}
&& 10.56
& 
& 103.5 $\pm$ 1.2
& 3.60
& 
& 247.7 $\pm$ 5.6
\\
\bottomrule
\end{tabular}}
\end{table}

Next, we test our approach on ImageNet generation of multiple high resolutions (i.e., resolutions 128, 256 and 512), and compare with existing methods in the literature. See Table \ref{table:imagenet128_comparison} for the summary of quantitative evaluations and Figure \ref{fig:i128_sample} for some generated samples by our approach. With the shifted version of `EDM-monotonic' weighting, we achieved state-of-the-art FID and IS scores on all three resolutions of ImageNet generation among all approaches without guidance. With classifier-free guidance (CFG)~\citep{ho2022classifier}, our method outperforms all diffusion-based approaches on resolutions 128 and 512. On resolution 256, our method falls a bit behind \cite{gao2023masked} and \cite{hang2023efficient}, both of which were build upon the latent space of a pretrained auto-encoder from \emph{latent diffusion models}~\citep{rombach2022high} that was trained on much larger image datasets than ImageNet, while our model was trained on ImageNet dataset only. It is worth noting that we achieve significant improvements compared to Simple Diffusion which serves as the backbone of our method, on all resolutions and in both settings of with and without guidance. It is possible to apply our proposed weighting functions and adaptive noise schedules to other diffusion-based approaches such as~\citet{gao2023masked} to further improve their performance, which we shall leave to the future work.

\section{Conclusion and Discussion}

In summary, we have shown that the weighted diffusion loss, which generalizes diffusion objectives in the literature, has an interpretation as a weighted integral of ELBO objectives, with one ELBO per noise level. If the weighting function is monotonic, then we show that the objective has an interpretation as the ELBO objective with data augmentation, where the augmentation is noise perturbation with a distribution of noise levels.

Our results open up exciting new directions for future work. The newfound equivalence between monotonic weighting and the ELBO with data augmentation, allows for a direct apples-to-apples comparison of diffusion models with other likelihood-based models. For example, it allows one to optimize other likelihood-based models, such as autoregressive transformers, towards the same objective as monotonically weighted diffusion models. This would shine light on whether diffusion models are better or worse than other model types, as measured in terms of their held-out objectives as opposed to FID scores. We leave such interesting experiments to future work.

\section*{Acknowledgments}

We'd like to thank Alex Alemi and Ben Poole for fruitful discussions and feedback on early drafts. We thank Emiel Hoogeboom for advice and help on the implementation of Simple Diffusion. 

\bibliographystyle{plainnat}
\bibliography{references.bib} 

\newpage

\appendix 

\section{Main proof}
\label{app:main_proof}

Here we'll provide a proof of \Eqref{eq:mainresult}. 

Note that like in the main text, we use shorthand notation:
\begin{align}
\Lt{t} := \Ltfull{t,...,1}
\end{align}

\subsection{Time derivative of $\Ltfull{t,...,1}$}
\label{app:time_derivative_of_dlt}
Let $dt$ denote an infinitesimal change in time.
Note that $\Lt{t-dt}$ can be decomposed as the sum of a KL divergence and an expected KL divergence:
\begin{align}
\Lt{t-dt} 
&= \Lt{t} + \E_{q(\rvz_t|\rvx)}[ D_{KL}(q(\rvz_{t-dt}|\rvz_t,\rvx)||p(\rvz_{t-dt}|\rvz_t)]
\label{eq:bla192}
\end{align}
Due to this identity, the time derivative $d/dt\, \Lt{t}$ can be expressed as:
\begin{align}
\frac{d}{dt} \Lt{t} 
&= \frac{1}{dt} (\Lt{t}-\Lt{t-dt})
\\
&= - \frac{1}{dt} \E_{q(\rvz_t|\rvx)}[ D_{KL}(q(\rvz_{t-dt}|\rvz_t,\rvx)||p(\rvz_{t-dt}|\rvz_t)]
\end{align}
In Appendix E of \citep{kingma2021variational}, it is shown that in our model, this equals:
\begin{align}
\frac{d}{dt} \Lt{t} 
&= - \frac{1}{2} \frac{SNR(t-dt) - SNR(t)}{dt} ||\rvx - \hat{\rvx}_{\bT}(\rvz_t;\lambda_t)||_2^2
\\
&= \frac{1}{2} SNR'(t) ||\rvx - \hat{\rvx}_{\bT}(\rvz_t;\lambda_t)||_2^2
\end{align}
where $\rvz_t = \alpha_\lambda \rvx + \sigma_\lambda \bepsilon$, and $SNR(t) := \exp(\lambda)$ in our model, and $SNR'(t) = d/dt\,SNR(t) = e^{\lambda} \,d\lambda/dt$, so in terms of our definition of $\lambda$, this is:
\begin{align}
\frac{d}{dt} \Lt{t} 
&= \frac{1}{2} e^{\lambda} \frac{d\lambda}{dt}
\E_{\bepsilon \sim \mathcal{N}(0, \mathbf{I})}\left[||\rvx - \hat{\rvx}_{\bT}(\rvz_t;\lambda_t)||_2^2 \right]
\label{eq:ddtltz}
\end{align}
In terms of $\bepsilon$-prediction (see Section \ref{sec:param}), because $||\bepsilon - \hat{\bepsilon}_{\bT}||_2^2 = e^{\lambda} 
||\rvx - \hat{\rvx}_{\bT}||_2^2$ this simplifies to:
\begin{empheq}[box={\mymath}]{equation}
\frac{d}{dt} \Lt{t} = \frac{1}{2} \frac{d\lambda}{dt} 
\E_{\bepsilon \sim \mathcal{N}(0, \mathbf{I})}\left[||\bepsilon - \hat{\bepsilon}_{\bT}(\rvz_t;\lambda_t)||_2^2 \right]
\label{eq:ddtlt}
\end{empheq}
where $\rvz_\lambda = \alpha_\lambda \rvx + \sigma_\lambda \bepsilon$. This can be easily translated to other parameterizations; see \ref{sec:param}. 

This allows us to rewrite the weighted loss of \Eqref{eq:lw_new} as:
\begin{align}
\mathcal{L}_w(\rvx) 
&=
\frac{1}{2} 
\E_{t \sim \mathcal{U}(0,1), \bepsilon \sim \mathcal{N}(0,\mathbf{I})} 
\left[
w(\lambda_t)
\cdot
-\frac{d\lambda}{dt}
\cdot || 
\hat{\bepsilon}_{\bT}(\rvz_t;\lambda) - \bepsilon
||_2^2
\right]
\\
&=
\E_{t \sim \mathcal{U}(0,1))} 
\left[
w(\lambda_t)
\cdot
-
\frac{1}{2} 
\frac{d\lambda}{dt}
\E_{\bepsilon \sim \mathcal{N}(0,\mathbf{I})} 
\left[
|| 
\hat{\bepsilon}_{\bT}(\rvz_t;\lambda) - \bepsilon
||_2^2
\right]
\right]
\\
&=
\E_{t \sim \mathcal{U}(0,1))} 
\left[
-
\frac{d}{dt} \Lt{t}
\,w(\lambda_t)
\right]
\\
&=
\int_0^1
-
\frac{d}{dt} \Lt{t}
\,w(\lambda_t)
\,dt
\label{eq:lw3}
\end{align}

\subsection{Integration by parts}
\label{sec:intbyparts}

Integration by parts is a basic identity, which tells us that:
\begin{align*}
- \int_a^b f(t) g'(t) dt  &=  \int_a^b f'(t)g(t) dt + f(a)g(a) - f(b)g(b)
\end{align*}

This allows us to further rewrite the expression of the weighted loss in \Eqref{eq:lw3} as:
\begin{align}
\mathcal{L}_w(\rvx) 
=&
\int_0^1
-
\frac{d}{dt} \Lt{t}
\,w(\lambda_t)
\;dt
\\
=&
\int_0^1
\frac{d}{dt} w(\lambda_t)
\,
\Lt{t}
\;dt
+ w(\lmax) \Lt{0}
- w(\lmin) \Lt{1}
\label{eq:intbyparts}
\end{align}
where
\begin{align}
- w(\lmin)\, \Lt{1} = - w(\lmin)\, \Ltfull{1}
\end{align}
is constant w.r.t. $\bT$, since it does not involve the score function, and typically very small, since $\Ltfull{1}$ is typically small by design.

The term $w(\lmax) \Ltfull{0,...,1}$ is typically small, since $w(\lmax)$ is typically very small (see Figure \ref{fig:weighting_functions}).

This concludes our proof of \Eqref{eq:mainresult}.
\renewcommand{\qedsymbol}{$\blacksquare$}
\qed

\section{Visualization}
We tried to create a helpful visualization of the result from Section \ref{sec:intbyparts}.
Note that we can rewrite:
\begin{align}
\int_0^1
\frac{d}{dt} w(\lambda_t)
\,
\Lt{t}
\;dt
= \int_{t=1}^{t=0} w(\lambda_t) \, d\Lt{t}
\end{align}

The relationship in \Eqref{eq:intbyparts} can be rewritten as:
\begin{align}
w(\lmin)\, \mathcal{L}(1) + \int_{t=1}^{t=0} w(\lambda_t) \, d\Lt{t}
= w(\lmax)\, \mathcal{L}(0) + \int_{t=0}^{t=1} \Lt{t} \, dw(\lambda_t)
\end{align}
The first LHS term equals a weighted prior loss term, and the second LHS term equals the weighted diffusion loss. From a geometric perspective, the two LHS terms together define an area that equals the area given by the right term, as illustrated in the figure below.
\tcbincludegraphics[width=1.0\textwidth,colback=white,colframe=gray]{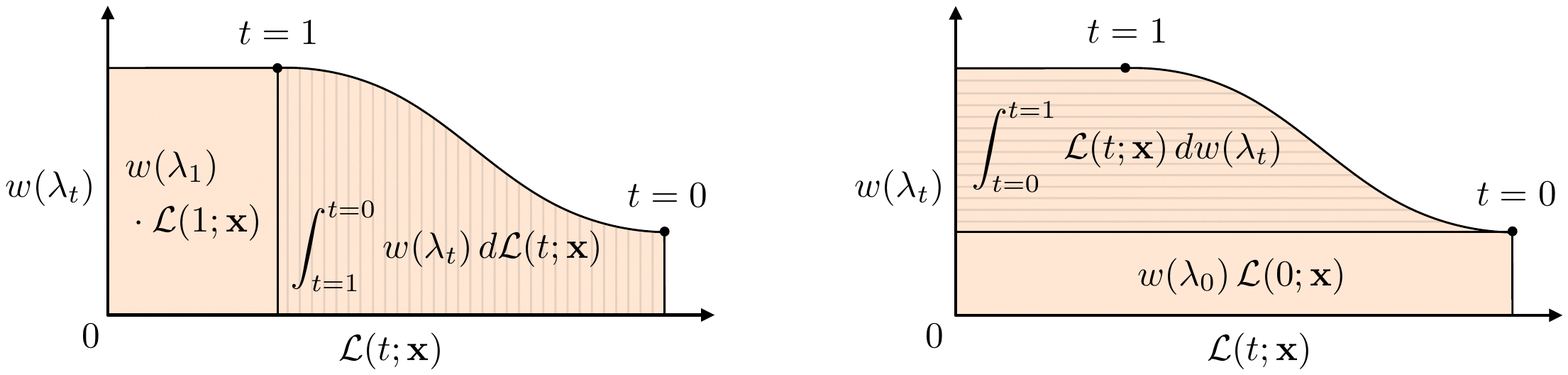}

On the left, we have a rectangular area that equals a weighted prior loss $w(\lmin)\mathcal{L}(1)$, plus a curved area equal to the weighted diffusion loss $\int_{t=1}^{t=0} w(\lambda_t) \, d\Lt{t}$. This integral can be intuitively understood as a Riemann sum over many tiny rectangles going from left ($t=1$) to right ($t=0$), each with height $w(\lambda_t)$ and width $d\Lt{t}$. On the right, we have the same total area, but divided up into two different subareas: a rectangular area $w(\lmax) \mathcal{L}(0)$ and a curved area that equals the integral $\int_{t=0}^{t=1} \Lt{t} \, dw(\lambda_t)$ going upwards from $t=0$ to $t=1$, which can also be intuitively understood as another Riemann sum, with each tiny rectangle having width $\Lt{t}$ and height $dw(\lambda_t)$. The area of each of those tiny rectangles on the right can be understood as the ELBO at each noise level, $\Lt{t}$, times the weight of the ELBO at each noise level, $dw(\lambda_t)$.

\section{Relationship between $\Ltfull{t,...,1}$ and the ELBO}
\label{sec:elbo}
First, note that:
\begin{align}
\Lt{t} = \Ltfull{t,...,1} \geq D_{KL}(q(\rvz_t|\rvx)||p(\rvz_t))
\end{align}
More precisely, the joint KL divergence $\Ltfull{t,...,1}$ is the expected negative ELBO of noise-perturbed data, plus a constant entropy term:
\begin{align}
\Lt{t} = \Ltfull{t,...,1} = - \E_{q(\rvz_t|\rvx)}[ \text{ELBO}_t(\rvz_t)] - \underbrace{\mathcal{H}(q(\rvz_t|\rvx))}_{\text{constant}}
\end{align}
where the ELBO of noise-perturbed data is:
\begin{align}
\text{ELBO}_t(\rvz_t) 
&:= \E_{q(\tilde{\rvz}_t|\rvz_t)}[ \log p(\rvz_t, \tilde{\rvz}_t) - \log q(\tilde{\rvz}_t|\rvz_t) ]
\\
&\leq \log p(\rvz_t)
\end{align}
where $\tilde{\rvz}_t := \rvz_{t+dt,...,1}$.

So, $\Lt{t}$ is the expected negative ELBO of noise-perturbed data $\rvz_t$:
\begin{align}
\Lt{t} 
= - \E_{q(\rvz_t|\rvx)}[ \text{ELBO}_t(\rvz_t)] + \text{constant}
\geq - \E_{q(\rvz_t|\rvx)}[ \log p(\rvz_t)] + \text{constant}
\end{align}

Therefore, the expression of the weighted loss in \Eqref{eq:mainresult2} can be rewritten as:
\begin{align}
\mathcal{L}_w(\rvx)
&=
\E_{p_w(t)} 
\left[
\Lt{t}
\right]
+ \text{constant}
\\
&= 
- \underbrace{
\E_{p_w(t), q(\rvz_t|\rvx)} 
\left[
\text{ELBO}_t(\rvz_t)
\right]
}_{\text{ELBO of noise-perturbed data}}
\;+\; \text{constant}
\\
&\geq 
- \underbrace{
\E_{p_w(t), q(\rvz_t|\rvx)} 
\left[
\log p(\rvz_t)
\right]
}_{\text{Log-likelihood of noise-perturbed data}}
\;+\; \text{constant}
\label{eq:distaug}
\end{align}
where $w(\lmin)$ is constant w.r.t. the diffusion model parameters. Therefore, minimizing $\mathcal{L}_w(\rvx)$ is equivalent to maximizing this expected ELBO of noise-perturbed data.

\section{Derivation of weighting functions for previous works}
\label{sec:loss_functions}

The loss function used in previous works are equivalent to the weighted loss with a certain choice of noise schedule and weighting function $w(\lambda)$. In this section, we derive these weighting functions $w(\lambda)$.

\subsection{`Elucidating Diffusion Models' (EDM) \citep{karras2022elucidating}}
\label{sec:edm}
\citet{karras2022elucidating} proposes the following training objective:
\begin{align}
  \mathcal{L_{\rm edm}}(\rvx) = \E_{\tilde{\sigma} \sim p(\tilde{\sigma}), \bepsilon \sim \mathcal{N}(0, \rmI)}\left[\tilde{w}(\tilde{\sigma})\| \rvx - \hat{\rvx}_\bT(\rvz_{\tilde{\sigma}}; \tilde{\sigma}) \|_2^2\right],
\end{align}
where $p(\tilde{\sigma})$ and $\tilde{w}(\tilde{\sigma})$ are defined as:
\begin{align}
p(\log \tilde{\sigma}) &= \mathcal{N}(\log \tilde{\sigma}; P_{\rm mean}, P^2_{\rm std}), \\
    \tilde{w}(\tilde{\sigma}) & = (\tilde{\sigma}^2 + \tilde{\sigma}_{\rm data}^2) / (\tilde{\sigma}^2 \cdot \tilde{\sigma}^2_{\rm data}),
\end{align}
where $\tilde{\sigma}$ is equivalent to the standard deviation of noise added to the clean sample $\rvx$ in the VP SDE case, so $\tilde{\sigma}^2 = e^{-\lambda}$. \cite{karras2022elucidating} used hyperparameters $P_{\rm mean} = -1.2$, $P_{\rm std} = 1.2$ and $\tilde{\sigma}_{\rm data} = 0.5$.

This can be rewritten in terms of $\lambda = - 2 \log \tilde{\sigma}$ as:
\begin{align}
  \mathcal{L_{\rm edm}}(\rvx) = \E_{p(\lambda), \bepsilon \sim \mathcal{N}(0, \rmI)}\left[\tilde{w}(\lambda)\| \rvx - \hat{\rvx}_\bT(\rvz_\lambda; \lambda) \|_2^2\right],
\end{align}
where $p(\lambda)$ and $\tilde{w}(\lambda)$ are defined as 
\begin{align}
p(\lambda) &= \mathcal{N}(\lambda; 2.4, 2.4^2), \\
    \tilde{w}(\lambda) & = (e^{-\lambda} + 0.5^2) / (e^{-\lambda} \cdot 0.5^2),
\label{eq:loss-edm}\end{align}
Comparing \Eqref{eq:loss-edm} with the weighted loss expressed in terms of $\rvx$-prediction parameterization (Section \ref{sec:param}), we see that (ignoring the constant scaling factor $1/2$) the EDM objective is a special case of the weighted loss with weighting function: 
\begin{align}
    w(\lambda) &= p(\lambda) e^{-\lambda} \cdot \tilde{w}(\lambda) \\
    &= p(\lambda) (e^{-\lambda} + 0.5^2) / 0.5^2.
\end{align}
where the divison by $\tilde{\sigma}_{\rm data}^2$ can be ignored since it's constant. This leads to:
\begin{align}
w(\lambda) =  \mathcal{N}(\lambda; 2.4, 2.4^2) (e^{-\lambda} + 0.5^2)
\end{align}

\subsection{The $\rvv$-prediction loss / `SNR+1'-weighting \citep{salimans2022progressive}}
\label{sec:vparam}

\cite{salimans2022progressive} introduced the $\rvv$-parameterization, with a $\rvv$-prediction model $\hat{\rvv}$, where:
\begin{align}
\rvv &:= \alpha_\lambda \bepsilon - \sigma_\lambda \rvx
\\
\hat{\rvv} &:= \alpha_\lambda \hat{\bepsilon} - \sigma_\lambda \hat{\rvx}
\end{align}
They propose to minimize a $\rvv$-prediction loss, $\E[||\rvv - \hat{\rvv}||_2^2]$.
Note that $\hat{\bepsilon} = (\rvz_\lambda - \alpha_\lambda \hat{\rvx})/\sigma_\lambda$.
For our general family, this implies:
\begin{align}
||\rvv - \hat{\rvv}||_2^2 
&= \sigma_\lambda^2 (e^{\lambda} + 1)^2 ||\rvx - \hat{\rvx}||_2^2
\label{eq:v-loss-general}
\\
&= \alpha_\lambda^2 (e^{-\lambda} + 1)^2  ||\bepsilon - \hat{\bepsilon}||_2^2
\label{eq:v-loss-general-2}
\end{align}
In the special case of the variance preserving (VP) SDE, this simplifies to:
\begin{align}
||\rvv - \hat{\rvv}||_2^2 
&= (e^{\lambda} + 1) ||\rvx - \hat{\rvx}||_2^2
\\
&= (e^{-\lambda} + 1) ||\bepsilon - \hat{\bepsilon}||_2^2.
\end{align}

Since the $\bepsilon$-prediction loss corresponds to minimizing the weighted loss with $w(\lambda)=p(\lambda)$, the $\rvv$-prediction loss corresponds to minimizing the weighted loss with $w(\lambda)=(e^{-\lambda} + 1)p(\lambda)$.

Note that \cite{salimans2022progressive} view the loss from the $\rvx$-prediction viewpoint, instead of our ELBO viewpoint; so in their view, minimizing simply $||\rvx - \hat{\rvx}||_2$ means no weighting. Note that $e^{\lambda}$ is the signal-to-noise ratio (SNR). Since $||\bepsilon - \hat{\bepsilon}||_2^2 = e^\lambda ||\rvx - \hat{\rvx}||_2^2$, they call the $\bepsilon$-prediction loss 'SNR weighting', and since $||\rvv - \hat{\rvv}||_2^2 
= (e^{\lambda} + 1) ||\rvx - \hat{\rvx}||_2^2$, they call this 'SNR+1'-weighting.

\cite{salimans2022progressive} propose to use optimize a VP SDE with a cosine schedule $p(\lambda) = \text{sech}(\lambda/2)/(2\pi) = 1/(2 \pi \cosh(-\lambda/2))$ and the $\rvv$-prediction loss: $\E[||\rvv - \hat{\rvv}||_2^2 ]$. This corresponds to minimizing the weighted loss with:
\begin{align}
w(\lambda)
&= (e^{-\lambda} + 1) p(\lambda) \\
&= (e^{-\lambda} + 1) / (2 \pi \cosh(-\lambda/2)) \\
&= \pi e^{-\lambda/2}
\end{align}
The factor $\pi$ can be ignored since it's constant, so we can equivalently use:
\begin{align}
w(\lambda) = e^{-\lambda/2}
\end{align}

\subsubsection{With shifted cosine schedule} \label{sec:shifted_schedule}
\citep{hoogeboom2023simple} extended the cosine schedule to a shifted version: $p(\lambda) = \text{sech}(\lambda/2 - s)/(2\pi)$, where $s = \log(64/d)$, where 64 is the base resolution and $d$ is the model resolution (e.g. 128, 256, 512, etc.). In this case the weighting is:
\begin{align}
w(\lambda)
&= (e^{-\lambda} + 1) p(\lambda) \\
&= (2/\pi) e^{-s} e^{-\lambda/2}
\end{align}
Since $(2/\pi) e^{-s}$ is constant w.r.t $\lambda$, the weighting is equivalent to the weighting for the unshifted cosine schedule.

\subsection{Flow Matching with the Optimal Transport flow path (FM-OT)~\citep{lipman2022flow}}
\label{sec:fm}

Flow Matching \citep{lipman2022flow} with the Optimal Transport flow path can be seen as a special case of Gaussian diffusion with the weighted loss.
\subsubsection{Noise schedule}
Note that in \citep{lipman2022flow}, time goes from 1 to 0 as we go forward in time. Here, we'll let time go from 0 to 1 as we go forward in time, consistent with the rest of this paper. We'll also assume $\sigma_0=0$, for which we can later correct by truncation (see Section \ref{sec:truncation}). In this model, the forward process $q(\rvz_t|\rvx)$ is defined by:
\begin{align}
\rvz_t 
&= \alpha_t \rvx + \sigma_t \bepsilon\\
&= (1-t) \rvx + t \bepsilon
\end{align}
This implies that the log-SNR is given by:
\begin{align}
\lambda_t = \fl(t)
&= \log(\alpha^2_t/\sigma^2_t)\\
&= 2 \log ( (1-t) / t)
\end{align}
Its inverse is given by:
\begin{align}
t 
&= \fli(\lambda) = 1/(1+e^{\lambda/2})\\
&= \text{sigmoid}(-\lambda/2)
\end{align}
The derivative, as a function of $t$, is:
\begin{align}
\frac{d\lambda}{dt} = \frac{d}{dt} \fl(t) = 2/(-t + t^2)
\end{align}
This derivative of its inverse, as a function of $\lambda$, is:
\begin{align}
\frac{dt}{d\lambda} = \frac{d}{d\lambda} \fli(\lambda) = \frac{d}{d\lambda} \text{sigmoid}(-\lambda/2) 
= - \text{sech}^2(\lambda/4)/8
\end{align}

The corresponding density is
\begin{align}
p(\lambda) = - \frac{d}{d\lambda} \fli(\lambda) = \text{sech}^2(\lambda/4)/8
\end{align}
which is a Logistic distribution; see also Table \ref{table:noise_schedules}.

\subsubsection{Score function parameterization and loss function}
\cite{lipman2022flow} then propose the following generative model ODE:
\begin{align}
d\rvz = - \hat{\rvo}(\rvz_t,t) dt
\end{align}
The model is then optimized with the \emph{Conditional flow matching} (CFM) loss:
\begin{align}
\mathcal{L}_{\text{CFM}}(\rvx) = \E_{t \sim \mathcal{U}(0,1), \bepsilon \sim \mathcal{N}(0,\bfI)}[ || \rvo - \hat{\rvo}||_2^2]
\end{align}
where they use the parameterization:
\begin{align}
\rvo &:= \rvx - \bepsilon
\end{align}

\subsubsection{Weighting function}
What is the weighting function $w(\lambda)$ corresponding to this loss? Note that this parameterization means that:
\begin{align}
\rvz_t &= (1-t)\rvx + t\bepsilon\\
&= (1-t)\rvo + \bepsilon\\
\rvo &= (\rvz_t - \bepsilon)/(1-t)
\end{align}
Since $t = 1/(1+e^{\lambda/2})$, we have that $1/(1-t) = 1+e^{-\lambda/2}$, so parameterized as a function of $\lambda$, we have:
\begin{align}
\rvo &= (\rvz_\lambda - \bepsilon) (1+e^{-\lambda/2})
\end{align}

Likewise, we can parameterize $\rvo$-prediction in terms of $\bepsilon$-prediction:
\begin{align}
\hat{\rvo}(\rvz_\lambda, \lambda) &= (\rvz_\lambda - \hat{\bepsilon}(\rvz_\lambda, \lambda))(1+e^{-\lambda/2})
\end{align}

We can translate the $\rvo$-prediction loss to a $\bepsilon$-prediction loss:
\begin{align}
|| \rvo - \hat{\rvo}(\rvz_\lambda, \lambda)||_2^2
&= (1+e^{-\lambda/2})^2 || \bepsilon - \hat{\bepsilon}(\rvz_\lambda, \lambda)||_2^2
\end{align}

Therefore, combining the derivations above, the CFM loss, formulated in terms of the $\lambda$ parameterization instead of $t$, and in terms of the $\bepsilon$-prediction parameterization instead of the $\rvo$-prediction parameterization, is:
\begin{align}
\mathcal{L}_{\text{CFM}}(\rvx) 
&= \E_{t \sim \mathcal{U}(0,1), \bepsilon \sim \mathcal{N}(0,\bfI)}[ || \rvo - \hat{\rvo}(\rvz_t,t)||_2^2]\\
&= \int_0^1 \E_{\bepsilon \sim \mathcal{N}(0,\bfI)}[ || \rvo - \hat{\rvo}(\rvz_t,t)||_2^2] \;dt\\
&= \int_{\lmin}^{\lmax} - \frac{dt}{d\lambda}\E_{\bepsilon \sim \mathcal{N}(0,\bfI)}[ || \rvo - \hat{\rvo}(\rvz_\lambda,\lambda)||_2^2] \;d\lambda\\
&= \int_{\lmin}^{\lmax} (\text{sech}^2(\lambda/4)/8) \E_{\bepsilon \sim \mathcal{N}(0,\bfI)}[ || \rvo - \hat{\rvo}(\rvz_\lambda,\lambda)||_2^2] \;d\lambda\\
&= \int_{\lmin}^{\lmax} (\text{sech}^2(\lambda/4)/8) (1+e^{-\lambda/2})^2 \E_{\bepsilon \sim \mathcal{N}(0,\bfI)}[ || \bepsilon - \hat{\bepsilon}(\rvz_\lambda, \lambda)||_2^2] \;d\lambda\\
&= \frac{1}{2} \int_{\lmin}^{\lmax} w(\lambda) \E_{\bepsilon \sim \mathcal{N}(0,\bfI)}[ || \bepsilon - \hat{\bepsilon}(\rvz_\lambda, \lambda)||_2^2] \;d\lambda\\
&= \frac{1}{2} \E_{\bepsilon \sim \mathcal{N}(0,\bfI), \lambda \sim \tilde{p}(\lambda)}\left[ \frac{w(\lambda)}{\tilde{p}(\lambda)} || \bepsilon - \hat{\bepsilon}(\rvz_\lambda, \lambda)||_2^2\right]
\end{align}
where $\tilde{p}$ is any distribution with full support on $[\lmin,\lmax]$, and
where:
\begin{align}
w(\lambda) 
&= 2 (\text{sech}^2(\lambda/4)/8) (1+e^{-\lambda/2})^2\\
&= e^{-\lambda/2}
\end{align}
Therefore, this weighting is equivalent to the weighting for the $\rvv$-prediction loss with cosine schedule (Section \ref{sec:vparam}): the CFM loss is equivalent to the $\rvv$-prediction loss with cosine schedule.

\subsection{Inversion by Direct Iteration (InDI)~\citep{delbracio2023inversion}}
\label{sec:indi}

\cite{delbracio2023inversion} propose Inversion by Direct Iteration (InDI). Their forward process is identical to the forward process of FM-OT \citep{lipman2022flow} introduced in Section \ref{sec:fm}:
\begin{align}
\rvz_t &= (1-t) \rvx + t \bepsilon
\end{align}
As derived in Section \ref{sec:fm} above, this means that the distribution over log-SNR $\lambda$ is the Logistic distribution: $p(\lambda) = \text{sech}^2(\lambda/4)/8$.
The proposed loss function is the $\rvx$-prediction loss:
\begin{align}
\mathcal{L}_{\text{InDI}}(\rvx) = \E_{t \sim \mathcal{U}(0,1), \bepsilon \sim \mathcal{N}(0,\bfI)}[ || \rvx - \hat{\rvx}(\rvz_t,t)||_2^2]
\end{align}
Since $||\rvx - \hat{\rvx}||_2^2 = e^{-\lambda} ||\bepsilon - \hat{\bepsilon}||_2^2$, and the $\bepsilon$-prediction loss corresponds to minimizing the weighted loss with $w(\lambda)=p(\lambda)$, the $\rvx$-prediction loss above corresponds to minimizing the weighted loss with:
\begin{align}
w(\lambda) 
&= e^{-\lambda} p(\lambda)\\
&= e^{-\lambda} \text{sech}^2(\lambda/4)/8
\end{align}
Which is a slightly different weighting then the FM-OT weighting, giving a bit more weighting to lower noise levels.

\subsection{Perception prioritized weighting (P2 weighting)~\citep{choi2022perception}}
\label{sec:p2-weighting}
\citet{choi2022perception} proposed a new weighting function:
\begin{equation}
    w(\lambda) = \frac{-dt/d\lambda}{(k + e^\lambda)^\gamma} = \frac{p(\lambda)}{(k + e^\lambda)^\gamma},
\end{equation}
where empirically they set $k=1$ and $\gamma$ as either $0.5$ or $1$. Compared to the $\epsilon$-prediction objective, where $w(\lambda) = dt/d\lambda = p(\lambda)$, this objective put more emphasis on the middle regime of the whole noise schedule, which \citet{choi2022perception} hypothesized to be the most important regime for creating content that is sensitive to visual perception. When combined with the most commonly used cosine noise schedule~\citep{nichol2021improved}, the weighting function becomes $w(\lambda) = \text{sech}(\lambda/2)/(1 + e^\lambda)^\gamma$.
 
\subsection{Min-SNR-$\gamma$ weighting~\citep{hang2023efficient}}
\label{sec:min-snr}
\citet{hang2023efficient} proposed the following training objective:
\begin{align}
    \mathcal{L}_{\text{MinSNR}}(\rvx) &= \E_{t \sim \mathcal{U}(0,1), \bepsilon \sim \mathcal{N}(0,\bfI)} \left[ \min\{e^\lambda, \gamma\} \|\rvx - \hat{\rvx}(\rvz_t;\lambda)\|_2^2 \right] \\
    &= E_{t \sim \mathcal{U}(0,1), \bepsilon \sim \mathcal{N}(0,\bfI)} \left[ \min\{1, \gamma e^{-\lambda}\} \|\bepsilon - \hat{\bepsilon}(\rvz_t;\lambda)\|_2^2 \right] \\
    &= E_{t \sim \mathcal{U}(0,1), \bepsilon \sim \mathcal{N}(0,\bfI)} \left[ \min\{1, \gamma e^{-\lambda}\}\cdot -\frac{dt}{d\lambda} \cdot -\frac{d\lambda}{dt} \|\bepsilon - \hat{\bepsilon}(\rvz_t;\lambda)\|_2^2 \right] \\
    &= E_{t \sim \mathcal{U}(0,1), \bepsilon \sim \mathcal{N}(0,\bfI)} \left[ \min\{1, \gamma e^{-\lambda}\}p(\lambda) \cdot -\frac{d\lambda}{dt} \|\bepsilon - \hat{\bepsilon}(\rvz_t;\lambda)\|_2^2 \right].
\end{align}
Therefore, it corresponds to $w(\lambda) = \min\{1, \gamma e^{-\lambda}\}p(\lambda)$. The motivation of the work is to avoid the model focusing too much on small noise levels, since it shares similar hypothesis to~\citep{choi2022perception} that small noise levels are responsible for cleaning up details that may not be perceptible. A cosine noise schedule is then combined with the proposed weighting function, leading to $w(\lambda) = \text{sech}(\lambda/2) \cdot \min\{1, \gamma e^{-\lambda}\}$. $\gamma$ is set as $5$ empirically. 

\section{Useful Equations}

\subsection{SDEs}
\label{sec:sdes}

The forward process is a Gaussian diffusion process, whose time evolution is described by a stochastic differential equation (SDE):
\begin{align}
d\rvz = \underbrace{\rvf(\rvz,t)}_{\text{drift}} dt + \underbrace{g(t)}_{\text{diffusion}} d\rvw
\label{eq:sde1}
\end{align}

For derivations of diffusion SDEs, see Appendix B of \citep{song2020score}. Their $\beta(t)$ equals $\frac{d}{dt} \log(1 + e^{-\lambda_t})$ in our formulation, and their $\int_0^t \beta(s) ds$ equals $\log(1 + e^{-\lambda_t})$, where they assume that $\lambda \to \infty$ at $t=0$.

\subsubsection{Variance-preserving (VP) SDE}
A common choice is the variance-preserving (VP) SDE, which generalizes denoising diffusion models~\citep{ho2020denoising} to continuous time~\citep{song2020score,kingma2021variational}.  In the VP case:
\begin{align}
\rvf(\rvz,t) &= -\frac{1}{2}\left(\frac{d}{dt} \log(1 + e^{-\lambda_t})\right) \rvz\\
g(t)^2 &= \frac{d}{dt} \log(1 + e^{-\lambda_t})\\
\alpha^2_\lambda &= \text{sigmoid}(\lambda)
\\
\sigma^2_\lambda &= \text{sigmoid}(-\lambda)
\\
p(\rvz_1) &= \mathcal{N}(0,\bfI)
\end{align}

\subsubsection{Variance-exploding (VE) SDE}

Another common choice of the variance-exploding (VE) SDE. In the VE case:
\begin{align}
\rvf(\rvz,t) &= 0\\
g(t)^2 &= \frac{d}{dt} \log(1 + e^{-\lambda_t})\\
\alpha^2_\lambda &= 1\\
\sigma^2_\lambda &= e^{-\lambda}
\\
p(\rvz_1) &= \mathcal{N}(0,e^{-\lmin} \bfI)
\end{align}

\subsection{Possible parameterizations of the score network}
\label{sec:param}

There are various ways of parameterizing the score network:
\begin{align}
\snT(\rvz; \lambda) &= - \nabla_{\rvz} E_{\bT}(\rvz, \lambda)
& \text{\;(With the gradient of an energy-based model)}
\\
&= - \hat{\bepsilon}_{\bT}(\rvz;\lambda) / \sigma_\lambda
& \text{\;(With a noise prediction model)}
\\
&= - \sigma_\lambda^{-2}(\rvz - \alpha_\lambda \hat{\rvx}_{\bT}(\rvz;\lambda))
& \text{\;(With a data prediction model)}
\end{align}
We can let a neural network output any of $\snT(\rvz_\lambda; \lambda)$, $\hat{\bepsilon}_{\bT}$ or $\hat{\rvx}_{\bT}$, and we can convert the variables to each other using the equalities above.

The chosen relationship between $\rvz_\lambda$, $\hat{\rvx}$, $\hat{\bepsilon}$ and $\snT(\rvz_\lambda; \lambda)$ above, are due to the following relationships between $\rvz_\lambda$, $\rvx$ and $\bepsilon$:
\begin{align}
    \rvz_\lambda &= \alpha_\lambda \rvx + \sigma_\lambda \bepsilon\\
    \rvx &= \alpha_\lambda^{-1} (\rvz_\lambda - \sigma_\lambda \bepsilon)\\
    \bepsilon &= \sigma_\lambda^{-1} (\rvz_\lambda - \alpha_\lambda \rvx)
\end{align}
And:
\begin{align}
\nabla_{\rvz_\lambda} \log q(\rvz_\lambda | \rvx)
    &= \nabla_{\rvz_\lambda} -||\rvz_\lambda - \alpha_\lambda \rvx||_2^2/(2\sigma^2_\lambda)\\
    &= -\sigma^{-2}_\lambda(\rvz_\lambda - \alpha_\lambda \rvx)\\
    &= -\sigma^{-2}_\lambda(\alpha_\lambda \rvx + \sigma_\lambda \bepsilon - \alpha_\lambda \rvx)\\
    &= -\bepsilon/\sigma_\lambda
\label{eq:grad_qz}
\end{align}

In addition, there's the $\rvv$-prediction parameterization ($\rvv := \alpha_\lambda \bepsilon - \sigma_\lambda \rvx$) explained in \ref{sec:vparam}, and the $\rvo$-prediction parameterization ($\rvo := \rvx - \bepsilon$) explained in \ref{sec:fm}.

\cite{karras2022elucidating} proposed a specific $\rmF$-parametrization, with an $\rmF$-prediction model $\hat{\rmF}_\theta$. In the special case of variance explosion (VE) SDE, it is formulated as:
\begin{align}
\rvx &= \frac{\tilde{\sigma}_{\rm data}^2}{e^{-\lambda} + \tilde{\sigma}_{\rm data}^2} \rvz_\lambda + \frac{e^{-\lambda/2} \tilde{\sigma}_{\rm data}}{\sqrt{e^{-\lambda} + \tilde{\sigma}_{\rm data}^2}} \rmF
\end{align}
where $\tilde{\sigma}_{\rm data} = 0.5$. Generalizing this to our more general family with arbitrary drift, this corresponds to:
\begin{align}
\rvx &= \frac{\tilde{\sigma}_{\rm data}^2 \alpha_\lambda}{e^{-\lambda} + \tilde{\sigma}_{\rm data}^2} \rvz_\lambda + \frac{e^{-\lambda/2} \tilde{\sigma}_{\rm data}}{\sqrt{e^{-\lambda} + \tilde{\sigma}_{\rm data}^2}} \rmF
\end{align}
So that we have:
\begin{align}
    \rmF &= \frac{\sqrt{e^{-\lambda} + \tilde{\sigma}_{\rm data}^2}}{e^{-\lambda / 2}\tilde{\sigma}_{\rm data}} \rvx - \frac{\tilde{\sigma}_{\rm data}\alpha_\lambda}{e^{-\lambda/2}\sqrt{e^{-\lambda} + \tilde{\sigma}_{\rm data}^2}} \rvz_\lambda \\
    &= -\frac{\sqrt{e^{-\lambda} + \tilde{\sigma}_{\rm data}^2}}{\tilde{\sigma}_{\rm data}} \bepsilon + \frac{e^{\lambda / 2}(e^{-\lambda} + \tilde{\sigma}_{\rm data}^2 - \tilde{\sigma}_{\rm data}^2 \alpha_\lambda^2)}{\sqrt{e^{-\lambda} + \tilde{\sigma}_{\rm data}^2} \tilde{\sigma}_{\rm data}\alpha_\lambda}\rvz_\lambda 
\end{align}
In summary, given these different parameterizations, the $\bepsilon$-prediction loss can be written in terms of other parameterizations as follows:
\begin{align}
||\bepsilon - \hat{\bepsilon}_{\bT}||_2^2 
&= 
e^{\lambda} 
||\rvx - \hat{\rvx}_{\bT}||_2^2
&\text{\;\;($\bepsilon$-prediction and $\rvx$-prediction error)}
\\
&= 
 \sigma_\lambda^2
||\nabla_{\rvz_\lambda} \log q(\rvz_\lambda | \rvx) - \snT||_2^2
&\text{\;\;(score prediction)}
\\
&= 
\alpha_\lambda^{-2} (e^{-\lambda} + 1)^{-2}
||\rvv - \hat{\rvv}_{\bT}||_2^2
&\text{\;\;($\rvv$-prediction, general)}
\\
&= 
(e^{-\lambda} + 1)^{-1}
||\rvv - \hat{\rvv}_{\bT}||_2^2
&\text{\;\;($\rvv$-prediction with VP SDE)}
\\
&= 
(e^{-\lambda} / \tilde{\sigma}_{\rm data}^2 + 1)^{-1}
||\rmF - \hat{\rmF}_{\bT}||_2^2
&\text{\;\;($\rmF$-prediction)}
\end{align}
Interestingly, if we set $\tilde{\sigma}_{\rm data}^2 = 1$, the training objectives of $\rmF$-prediction and $\rvv$-prediction are the same. %

\subsection{Noise schedules}
\label{sec:schedules}

\begin{table}[t]
\caption{
Noise schedules used in our experiments: cosine \citep{nichol2021improved}, shifted cosine \citep{hoogeboom2023simple}, and EDM \citep{karras2022elucidating} training and sampling schedules. Note that these are the noise schedules \emph{before} truncation (Section \ref{sec:truncation}).
}
\label{table:noise_schedules}
\scriptsize
{\renewcommand{\arraystretch}{1.5}%
\begin{tabular}{ llll }
\toprule
\textit{Noise schedule name} & $\lambda = \fl(t) = ...$ & $t = \fli(\lambda) = ...$ & $p(\lambda) = - \frac{d}{d\lambda} \fli(\lambda) = ...$\\
\midrule
Cosine & $-2\log(\tan(\pi t/2))$ & $(2/\pi) \arctan(e^{-\lambda/2})$ & $\text{sech}(\lambda/2)/(2\pi)$\\ 
Shifted cosine & $-2\log(\tan(\pi t/2)) + 2 s$ & $(2/\pi) \arctan(e^{-\lambda/2 - s})$ & $\text{sech}(\lambda/2 - s)/(2\pi)$\\
EDM (training) & $-F_{\mathcal{N}}^{-1}(t; 2.4, 2.4^2)$ & $F_{\mathcal{N}}(-\lambda; 2.4, 2.4^2)$ & $\mathcal{N}(\lambda; 2.4, 2.4^2)$\\
EDM (sampling) 
& {\tiny ${-2 \rho \log(\sigma_{\max}^{1/\rho} \atop + (1-t) (\sigma_{\min}^{1/\rho} - \sigma_{\max}^{1/\rho}))}$}
& $1 - \frac{e^{-\lambda / (2\rho)} - \sigma_{\max}^{1/\rho}}{ \sigma_{\min}^{1/\rho} - \sigma_{\max}^{1/\rho}}$
& $\frac{e^{-\lambda / (2\rho)}}{2 \rho (\sigma_{\max}^{1/\rho} - \sigma_{\min}^{1/\rho})}$ \\
Flow Matching with OT (see \ref{sec:fm})
& $2 \log((1-t)/t)$
& $1/(1+e^{\lambda/2})$
& $\text{sech}^2(\lambda/4)/8$
\\
\bottomrule
\end{tabular}
}
\end{table}

During model training, we sample time $t$ uniformly: $t \sim \mathcal{U}(0,1)$, then compute $\lambda = \fl(t)$. This results in a distribution over noise levels $p(\lambda)$, whose cumulative density function (CDF) is given by $1-\fli(\lambda)$. For $\lambda \in [\lmin, \lmax]$ the probability density function (PDF) is the derivative of the CDF, which is $p(\lambda_t) = -(d/d\lambda)\, \fli(\lambda) = -dt/d\lambda = -1/\fl'(t)$. Outside of $[\lmin, \lmax]$ the probability density is 0.

In table \ref{table:noise_schedules} we provide some popular noise schedules: cosine \citep{nichol2021improved}, shifted cosine \citep{hoogeboom2023simple}, and EDM \citep{karras2022elucidating}. We do not list the 'linear' schedule by \citep{ho2020denoising}, $\fl(t) = -\log(e^{t^2}-1)$ which has fallen out of use.

Note that these are the noise schedules \emph{before} truncation. The truncation procedure is given in \ref{sec:truncation}.

where:
\begin{itemize}
\item In the shifted cosine schedule, $s = \log(64/d)$, where 64 is the base resolution and $d$ is the used resolution (e.g. 128, 256, 512, etc.).
\item In the EDM training schedule, the function $F_{\mathcal{N}}(\lambda; \mu, \sigma^2)$ is the Normal distribution CDF, and $\mathcal{N}(\lambda; \mu, \sigma^2)$ is its PDF.
\item In the EDM sampling schedule, $\rho=7$, $\sigma_{\min} = 0.002$, $\sigma_{\max} = 80$. The density function $p(\lambda)$ in the table has support $\lambda \in [- \log \sigma^2_{\max}, - \log \sigma^2_{\min}]$. Outside this range, $p(\lambda) = 0$.
\end{itemize}

\subsubsection{Truncation}
\label{sec:truncation}

The noise schedules above are truncated, resulting in a noise schedule $\tfl(0)$ whose endpoints have desired values $[\tfl(0),\tfl(1)] = [\lmax, \lmin]$:
\begin{align}
\tfl(t) &:= \fl(t_0 + (t_1 - t_0) t)\\
\text{where:}\;\; 
t_0 &:= \fli(\lmax)\\
t_1 &:= \fli(\lmin)
\end{align}
Its inverse is:
\begin{align}
\tfli(\lambda) = (\fli(\lambda) - t_0)/(t_1 - t_0)
\end{align}
And the corresponding probability density:
\begin{align}
\text{if\;} \lmin \leq \lambda \leq \lmax:\;\;\;& \tilde{p}(\lambda) = -\frac{d}{d\lambda} \tfli(\lambda) 
= -\frac{d}{d\lambda} \fli(\lambda) / (t_1 - t_0)
= p(\lambda) / (t_1 - t_0)\\
\text{else:}\;\;\;& \tilde{p}(\lambda) = 0
\end{align}

\subsection{Sampling}
\label{sec:sampling}

\cite{anderson1982reverse} showed that if $\snT(\rvz;\lambda) = \nabla_{\rvz} \log q_t(\rvz)$, then the forward SDE is exactly reversed by the following SDE:
\begin{align}
d\rvz = [\rvf(\rvz,t) - g(t)^2 \snT(\rvz;\lambda)] dt + g(t) d\rvw
\label{eq:reverse}
\end{align}

Recent diffusion models have used increasingly sophisticated samplers. As an alternative to solving the SDE, \citep{song2020score} showed that sampling from the model can alternatively be done by solving the following \emph{probability flow} ODE:
\begin{align}
d\rvz = [\rvf(\rvz,t) - \frac{1}{2} g(t)^2 \snT(\rvz;\lambda)] dt
\label{eq:reverse_ode}
\end{align}
which, under the assumption that $\snT$ is a conservative vector field, will result in the same marginals $p(\rvz_t)$ as the SDE of \Eqref{eq:reverse} for every $t \in [0,1]$, and therefore also the same marginal $p(\rvx)$. 

Note that due to the continuous-time nature of the model, any sampling method is necessarily approximate, with the discretization error depending on various factors including the choice of noise schedule. For sampling we can therefore typically use a different noise schedule $\fl$ for sampling than for training, and we can change the SDE drift term; as long as we appropriately rescale the input to the score network, this would still result in correct samples. 

\section{Adaptive noise schedule}
\label{sec:adaptive}

\begin{figure}[t]
	\centering
	\begin{subfigure}{.49\textwidth}
		\includegraphics[width=.9\textwidth]{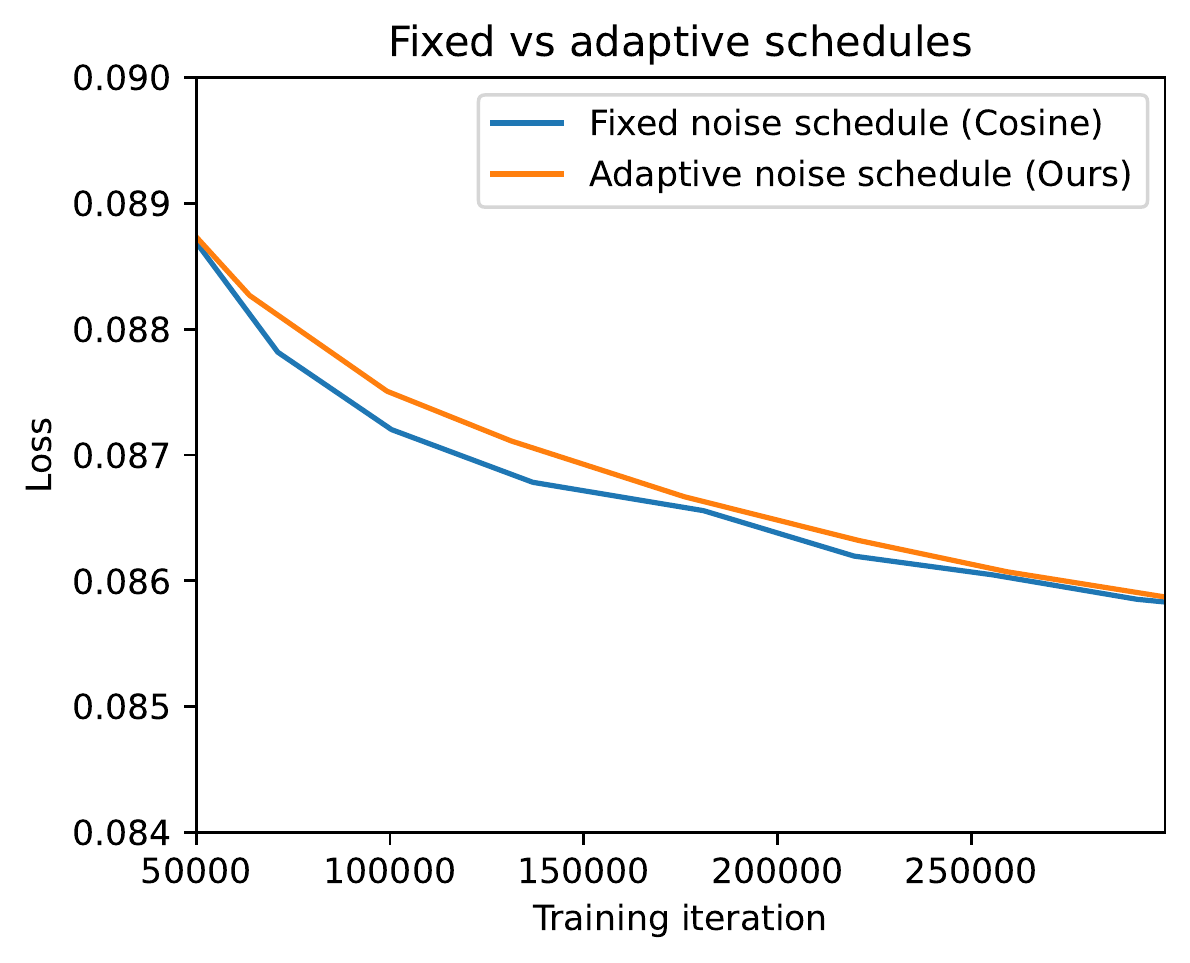}
		\caption{}
	\end{subfigure}
	\begin{subfigure}{.49\textwidth}
		\includegraphics[width=.9\textwidth]{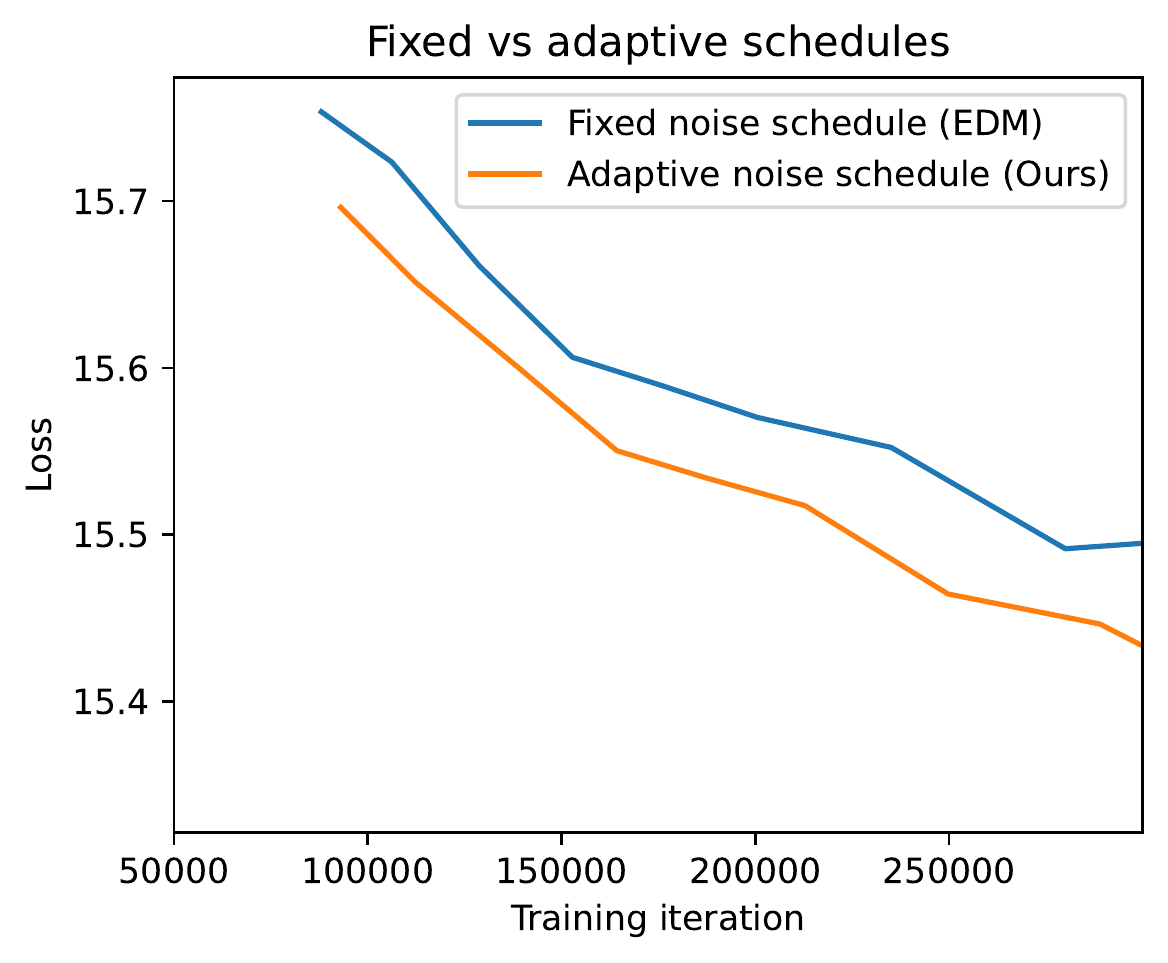}
		\caption{}
	\end{subfigure}
\caption{Our proposed adaptive noise schedule allows us to freely change the weighting function without needing to handtune a corresponding noise schedule. For some models it leads to slightly worse convergence speed (left) compared to a static noise schedule, probably because the noise schedule was already well-tuned for the weighting function, while in other cases it led to faster convergence (right). The adaptive noise schedule did not significantly affect the end result, but it did allow us to more freely experiment with weighting functions.}
\label{fig:adaptive}
\end{figure}

\label{sec:adaptive_main}

The invariance shown in Section \ref{sec:invariance} holds for the loss $\mathcal{L}_w(\rvx)$, but not for the Monte Carlo estimator of the loss that we use in training, based on random samples $t \sim \mathcal{U}(0,1), \bepsilon \sim \mathcal{N}(0,\mathbf{I})$. The noise schedule still affects the \emph{variance} of this Monte Carlo estimator and its gradients; therefore, the noise schedule affects the efficiency of optimization. 

In fact, the noise schedule acts as an importance sampling distribution for estimating the loss integral of \Eqref{eq:lw_integral}. 
Specifically, note that $p(\lambda) = -1/(d\lambda/dt)$. We can therefore rewrite the weighted loss as the following, which clarifies the role of $p(\lambda)$ as an importance sampling distribution:
\begin{align}
\mathcal{L}_w(\rvx) = 
\frac{1}{2} 
\E_{\bepsilon \sim \mathcal{N}(0,\mathbf{I}), \lambda \sim p(\lambda)} 
\left[
\frac{\colorD{w(\lambda)}}
{\colorA{p(\lambda)}}
|| 
\colorC{\hat{\bepsilon}_{\bT}(\rvz_\lambda;\lambda)} - \colorE{\bepsilon}
||_2^2
\right]
\end{align}

In order to avoid having to hand-tune the noise schedule for different weighting functions, we implemented an adaptive noise schedule. The noise schedule $\lambda_t$ is updated online, where we let $p(\lambda) \propto \E_{\rvx \sim \mathcal{D},\bepsilon \sim \mathcal{N}(0, \mathbf{I})}[\colorD{w(\lambda)}
|| 
\colorC{\hat{\bepsilon}_{\bT}(\rvz_\lambda;\lambda)} - \colorE{\bepsilon}
||_2^2 ]$. This noise schedule ensures that the loss is spread evenly over time, i.e. that the magnitude of the loss $\E_{\rvx \sim \mathcal{D}, \bepsilon \sim \mathcal{N}(0,\mathbf{I})} 
\left[
(\colorD{w(\lambda)}/\colorA{p(\lambda)})
|| 
\colorC{\hat{\bepsilon}_{\bT}(\rvz_\lambda;\lambda)} - \colorE{\bepsilon}
||_2^2
\right]$ is approximately invariant to $\lambda$ or $t$. We find that this often significantly speeds op optimization.

We implemented an adaptive noise schedule $p(\lambda)$, where:
\begin{align}
p(\lambda) \propto \E_{\rvx \sim \mathcal{D},\bepsilon \sim \mathcal{N}(0, \mathbf{I})}[w(\lambda) ||\bepsilon - \hat{\bepsilon}_{\bT}(\rvz_\lambda;\lambda)||_2^2 ]
\label{eq:adaptive_lambda}
\end{align}

In practice we approximate this by dividing the range $[\lmin, \lmax]$ into 100 evenly spaced bins, and during training keep an exponential moving average (EMA) of $w(\lambda) ||\bepsilon - \hat{\bepsilon}_{\bT}(\rvz_\lambda;\lambda)||_2^2 $ within each bin. From these EMAs we construct a piecewise linear function $\fl(t)$ such that \Eqref{eq:adaptive_lambda} is approximately satisfied. The EMAs and corresponding noise schedule $p(\lambda)$ are updated at each training iteration.

In experiments we measure the effect of changing the fixed noise schedule of existing modules with an adaptive schedule. We found that this lead to approximately equal FID scores. In half of the experiments, optimization was approximately as fast as with the original noise schedule, while in the other half the adaptive noise schedule lead to faster optimization (see Figure \ref{fig:adaptive}). The end results were not significantly altered.

\section{Relationship between the KL divergence and Fisher divergence}
\label{sec:fisher}

We'll use the following definition of the Fisher divergence~\citep{lyu2012interpretation}:
\begin{align}
D_{F}(q(\rvx)||p(\rvx)) := \E_q(\rvx)[||\nabla_\rvx \log q(\rvx) - \nabla_\rvx \log p(\rvx)||_2^2]
\label{eq:df_def}
\end{align}
\begin{theorem}
Assume a model in the family specified in Section \ref{sec:model_family}, and assume the score network encodes a conservative vector field: $\rvs_{\bT}(\rvz_t, \lambda_t) = \nabla_{\rvz_t} \log p(\rvz_t)$ (not assumed by the other theorems). Then:
\begin{align}
\frac{d}{d\lambda} \Ltfull{t,...,1} 
&= \frac{1}{2} \sigma_\lambda^2
 D_{F}(q(\rvz_t|\rvx)||p(\rvz_t))
\label{eq:df2}
\end{align}
\label{theorem:df}
\end{theorem}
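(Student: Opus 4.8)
The plan is to reduce Theorem~\ref{theorem:df} to the time-derivative identity already established in Appendix~\ref{app:time_derivative_of_dlt}, change variables from $t$ to $\lambda$, and then recognize the resulting noise-prediction error as a Fisher divergence once the conservative-field hypothesis is invoked.

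First I would start from the boxed identity \eqref{eq:ddtlt},
\begin{align*}
\frac{d}{dt}\Ltfull{t,...,1} = \frac{1}{2}\,\frac{d\lambda}{dt}\,\E_{\bepsilon \sim \mathcal{N}(0,\mathbf{I})}\!\left[\|\bepsilon - \hat{\bepsilon}_{\bT}(\rvz_\lambda;\lambda)\|_2^2\right],
\end{align*}
where $\rvz_\lambda = \alpha_\lambda\rvx + \sigma_\lambda\bepsilon$. Since the noise schedule $\fl$ is a strictly monotonic bijection between $t$ and $\lambda$, the chain rule gives $\frac{d}{d\lambda} = \frac{dt}{d\lambda}\,\frac{d}{dt}$, and the Jacobian factor $dt/d\lambda$ exactly cancels the $d\lambda/dt$, leaving
\begin{align*}
\frac{d}{d\lambda}\Ltfull{t,...,1} = \frac{1}{2}\,\E_{\bepsilon \sim \mathcal{N}(0,\mathbf{I})}\!\left[\|\bepsilon - \hat{\bepsilon}_{\bT}(\rvz_\lambda;\lambda)\|_2^2\right].
\end{align*}

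Next I would rewrite the noise-prediction error as a score-prediction error. Using the parameterization relations of Section~\ref{sec:param} --- namely $\nabla_{\rvz_\lambda}\log q(\rvz_\lambda|\rvx) = -\bepsilon/\sigma_\lambda$ from \eqref{eq:grad_qz} and $\snT(\rvz_\lambda;\lambda) = -\hat{\bepsilon}_{\bT}(\rvz_\lambda;\lambda)/\sigma_\lambda$ --- we get $\|\bepsilon - \hat{\bepsilon}_{\bT}\|_2^2 = \sigma_\lambda^2\,\|\nabla_{\rvz_\lambda}\log q(\rvz_\lambda|\rvx) - \snT(\rvz_\lambda;\lambda)\|_2^2$. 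Substituting the extra hypothesis of this theorem, $\snT(\rvz_t,\lambda_t) = \nabla_{\rvz_t}\log p(\rvz_t)$, the integrand becomes $\sigma_\lambda^2\,\|\nabla_{\rvz_t}\log q(\rvz_t|\rvx) - \nabla_{\rvz_t}\log p(\rvz_t)\|_2^2$.

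Finally I would note that drawing $\bepsilon \sim \mathcal{N}(0,\mathbf{I})$ and forming $\rvz_\lambda = \alpha_\lambda\rvx + \sigma_\lambda\bepsilon$ is exactly drawing $\rvz_t \sim q(\rvz_t|\rvx)$, so the expectation over $\bepsilon$ is an expectation over $q(\rvz_t|\rvx)$; comparing with the definition of the Fisher divergence in \eqref{eq:df_def} yields $\frac{1}{2}\sigma_\lambda^2\,D_F(q(\rvz_t|\rvx)\|p(\rvz_t))$, which is \eqref{eq:df2}. The step most deserving of care --- rather than a genuine obstacle --- is the use of the conservative-field assumption: the learned score $\snT$ must coincide with $\nabla\log p$ of the \emph{marginal} $p(\rvz_t)$ at the evaluation point, which is precisely what ``encodes a conservative vector field'' means here, so that the pointwise substitution inside the expectation is legitimate and no cross-terms survive. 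Without it, $\snT$ would at best be the score of the forward marginal $q_t$, and the identification with $D_F(q(\rvz_t|\rvx)\|p(\rvz_t))$ would fail.
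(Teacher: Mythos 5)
Your proposal is correct and follows essentially the same route as the paper's proof: both start from the identity \eqref{eq:ddtlt}, cancel the $d\lambda/dt$ factor by changing variables from $t$ to $\lambda$, rewrite $\|\bepsilon - \hat{\bepsilon}_{\bT}\|_2^2$ as $\sigma_\lambda^2\,\|\nabla_{\rvz_t}\log q(\rvz_t|\rvx) - \snT(\rvz_t,\lambda_t)\|_2^2$ via \eqref{eq:grad_qz}, invoke the conservative-field hypothesis to identify $\snT$ with $\nabla_{\rvz_t}\log p(\rvz_t)$, and read off the Fisher divergence from \eqref{eq:df_def}. Your version merely makes explicit the chain-rule cancellation and the identification of the $\bepsilon$-expectation with $\E_{q(\rvz_t|\rvx)}$, which the paper leaves implicit.
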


\begin{proof}[Proof of Theorem \ref{theorem:df}]

Note that (see \Eqref{eq:grad_qz}):
\begin{align}
\nabla_{\rvz_t} \log q(\rvz_t | \rvx)
&= -\bepsilon/\sigma_\lambda
\end{align}
And assume the score network encodes a conservative vector field:
\begin{align}
\nabla_{\rvz_t} \log p(\rvz_t)
= \rvs_{\bT}(\rvz_t, \lambda_t)
= -\hat{\bepsilon}_{\bT}(\rvz_t; t)/\sigma_\lambda
\label{eq:conservative_vector_field}
\end{align}
So the time derivative of \Eqref{eq:ddtlt} can be expressed as:
\begin{align}
\frac{d}{d\lambda} \Ltfull{t,...,1}
&= 
\frac{1}{2} \sigma_\lambda^2 
\E_{q(\rvz_t | \rvx)}\left[
||\nabla_{\rvz_t} \log q(\rvz_t | \rvx) - \nabla_{\rvz_t} \log p(\rvz_t)||_2^2
\right]
\end{align}
\Eqref{eq:df2} follows from the definition of the Fisher divergence.
\end{proof}

\subsection{Comparison with Theorem 1 by \cite{lyu2012interpretation}}

\cite{lyu2012interpretation} prove a similar result in their Theorem 1. We'll translate their result into our notation. In particular, let the forward process be as in our family, such that $q(\rvz_t|\rvx) = \mathcal{N}(\rvz_t; \alpha_\lambda \rvx, \sigma^2_t \bfI)$. The marginal (data) distribution is $q(\rvx)$, such that $q(\rvz_t) = \int q(\rvz_t | \rvx) q(\rvx) d\rvx$. Similarly, let the generative model have a marginal $p(\rvx)$, and $p(\rvz_t) = \int p(\rvz_t|\rvx) p(\rvx) d\rvx$. So far the assumptions are the same in our family. 

They assume that $q(\rvz_t|\rvx) = \mathcal{N}(\rvz_t, \rvx, t)$, which corresponds to a variance exploding (VE) diffusion process, with $t = \sigma_\lambda^2$, so $\lambda = -\log(t)$. 
Importantly, they make the assumption that $p(\rvz_t|\rvx) = q(\rvz_t|\rvx)$, i.e. that the forward process for $p$ equals the forward process for $q$.
Given these assumptions, \cite{lyu2012interpretation} show that:
\begin{align}
\frac{d}{dt} D_{KL}(q(\rvz_t)||p(\rvz_t)) 
&= - \frac{1}{2}
 D_{F}(q(\rvz_t)||p(\rvz_t))
\end{align}
Which, given the noise schedule $\lambda = -\log(t)$, can be rewritten as:
\begin{align}
\frac{d}{d\lambda} D_{KL}(q(\rvz_t)||p(\rvz_t)) 
&= \frac{1}{2} \sigma_\lambda^2
 D_{F}(q(\rvz_t)||p(\rvz_t))
\label{eq:lyu}
\end{align}
which looks a lot like our \Eqref{eq:df2}. One difference are that in \Eqref{eq:df2}, the left-hand-side distributions are joint distributions, and $q$ conditions on $\rvx$, while \Eqref{eq:lyu} is about the unconditional $q$. Another key difference is that for \Eqref{eq:df2} we need fewer assumptions: most importantly, we do not make the assumption that $p(\rvz_t|\rvx) = q(\rvz_t|\rvx)$, since this assumption does \emph{not} hold for the family of diffusion models we consider. Before or during optimization, $p(\rvz_t|\rvx)$ might be very far from $q(\rvz_t|\rvx)$. After optimization, $p(\rvz_t|\rvx)$ might be close to $q(\rvz_t|\rvx)$, but we still can't assume they're equal. In addition, we're mostly interested in the properties of the loss function during optimization, since that's when we're using our loss for optimization. We for this reason, our Theorem \ref{theorem:df} is a lot more relevant for optimization.

\section{Implementation details}
Instead of uniformly sampling $t$, we applied the low-discrepency sampler of time that was proposed by \citet{kingma2021variational}, which has been shown to effectively reduce the variance of diffusion loss estimator and lead to faster optimization. The model is optimized by \emph{Adam}~\citep{kingma2014adam} with the default hyperparameter settings. We clipped the learning gradient with a global norm of 1. 

For the adaptive noise schedules, we divided the range of $[\lambda_{\min}, \lambda_{\max}]$ into $100$ evenly spaced bins. During training, we maintaiedn an exponential moving average of $w(\lambda) ||\bepsilon - \hat{\bepsilon}_{\bT}(\rvz_\lambda;\lambda)||_2^2 $ with a decay rate 0.999, and a constant initialization value of 1 for each bin.

Below we elaborate the implementation details specific for each task.
\paragraph{ImageNet 64x64.} For class-conditional generation on ImageNet 64x64, we applied the ADM U-Net architecture from \citet{dhariwal2021diffusion}, with dropout rate $0.1$. We didn't use any data augmentation. The model was trained with learning rate $1e-4$, exponential moving average of 50 million images and learning rate warmup of 10 million images, which mainly follows the configuration of \citet{karras2022elucidating}. We employed 128 TPU-v4 chips with a batch size of 4096 (32 per chip). We trained the model for 700k iterations and reported the performance of the checkpoint giving the best FID score (checkpoints were saved and evaluated on every 20k iterations). It took around 3 days for a single training run. For training noise schedule and sampling noise schedule of DDPM sampler, we set $\lambda_{\min} = -20$ and $\lambda_{\max} = 20$. We fixed the noise schedule used in sampling to the cosine schedule for the DDPM sampler, and the EDM (sampling) schedule for the EDM sampler (see Table \ref{table:noise_schedules} for the formulations). We adopted the same hyperparameters of EDM sampler from \citet{karras2022elucidating} with no changes (i.e., Table 5 in their work, column `ImageNet-Our model'). Both DDPM and EDM samplers took 256 sampling steps. 

\paragraph{ImageNet 128x128.} For class-conditional generation on ImageNet 128x128, we heavily followed the setting of \emph{simple diffusion}~\citep{hoogeboom2023simple}. Specifically, we used their `U-ViT, L' architecture, and followed their learning rate and EMA schedules. The data was augmented with random horizontal flip. The model is trained using 128 TPU-v4 chips with a batch size of 2048 (16 per chip). We trained the model for 700 iterations and evaluated the FID and inception scores every 100k iterations. The results were reported with the checkpoint giving the best FID score. It took around 7 days for a single run. We set $\lambda_{\min} = -15 + s$ and $\lambda_{\max} = 15 + s$, where $s = \log(64/d)$ is the shift of the weighting function, with 64 being the base resolution and $d$ being the model resolution ($d=128$ for this task). The DDPM sampler used for evaluation used `shifted-cosine' noise schedule (Table \ref{table:noise_schedules}) and took 512 sampling steps. 

\section{Relationship with low-bit training}

Various earlier work, such as \citep{kingma2018glow}, found that maximum likelihood training on 5-bit data can lead to perceptually higher visual quality than training on 8-bit data (at the cost of a decrease in color fidelity). A likely reason that this leads to improved visual quality, is that this allows the model the spend more capacity on modeling the bits that are most relevant for human perception.

In \citep{kingma2018glow}, training on 5-bit data was performed by adding uniform noise to the data, before feeding it to the model. It was found that adding Gaussian noise had a similar effect as uniform noise. As we have seen, in the case of diffusion models, adding Gaussian noise is equivalent to using a the weighted objective with a monotonic weighting function.

Therefore, training on 5-bit data is similar to training using a monotonic weighting function in case of diffusion models. We can wonder: which weighting function emulates training on 5-bit data? Here, we'll attempt to answer this question.

\subsection{The shape of $\frac{d}{d\lambda} \Lt{\lambda}$ for low-bit data}
Note that the results of Appendix \ref{app:time_derivative_of_dlt} can also be written as:
\begin{align}
\frac{d}{d\lambda} \Lt{\lambda}
=
\frac{1}{2} 
\E_{\bepsilon \sim \mathcal{N}(0, \mathbf{I})}\left[
||\bepsilon - \hat{\bepsilon}_{\bT}(\rvz_\lambda;\lambda)||_2^2
\right]
\end{align}
This allows us to rewrite the weighted loss as simply:
\begin{align}
\mathcal{L}_w(\rvx) 
&=
\int_{\lmax}^{\lmin}
\frac{d}{d\lambda} \Lt{\lambda}
\, w(\lambda)
\;d\lambda
\end{align}
To understand the effect of $\lambda$, we can plot the $\frac{d}{d\lambda} \Lt{\lambda}$ as a function of $\lambda$. 

We'd like to plot $\frac{d}{d\lambda} \Lt{\lambda}$ dof different choices of bit precision. This will tell us where the different bits 'live' as a function of $\lambda$. Since training different diffusion models on different bit precisions is very expensive, we instead use an approximation. In particular, we assume that the data $\rvx$ is univariate, with a uniform distribution $q(\rvx)$ over the $2^n$ possible pixel values, where $n$ is the bit precision. The data is $\rvx$ is, as usual, normalized to $[-1,1]$. We then let the model $p(\rvz_\lambda)$, for each choice of $\lambda$, be the optimal model: $p(\rvz_\lambda) := \int q(\rvx) q(\rvz_\lambda|\rvx) d\rvz$, which is a univariate mixture-of-Gaussians, where each mixture component is a Gaussian centered one of the $2^n$ possible pixel values. In this case, $\Lt{\lambda} := \Ltfull{\lambda}$.

We plot the function $\E_{q(\rvx)}[\Lt{\lambda}]$, for differences choices of $n = 1, ..., 8$, below:

\includegraphics[width=1.0\textwidth]{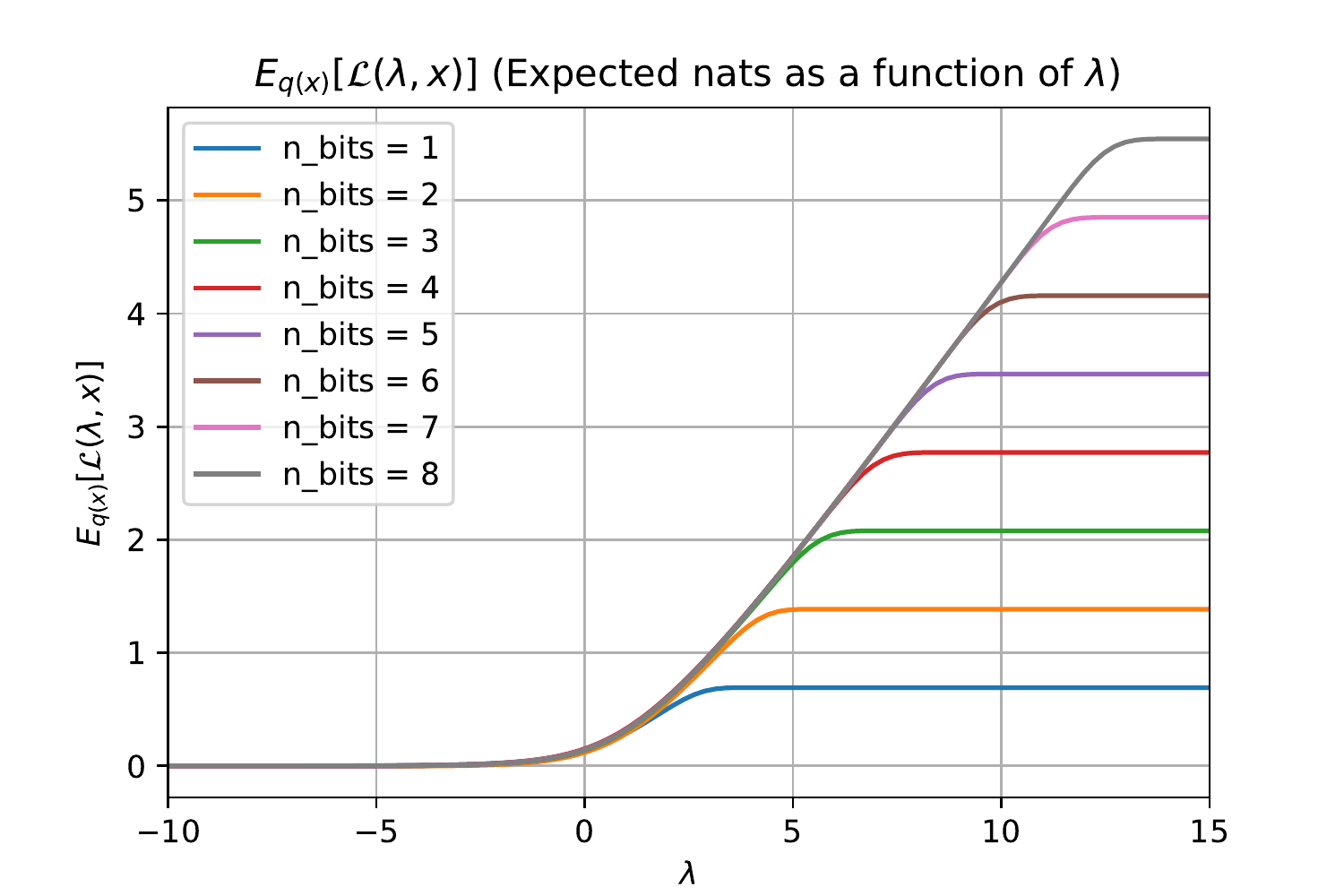}

Next, we plot $\E_{q(\rvx)}[\frac{d}{d\lambda} \Lt{t_\lambda}]$, for each $n$:

\includegraphics[width=1.0\textwidth]{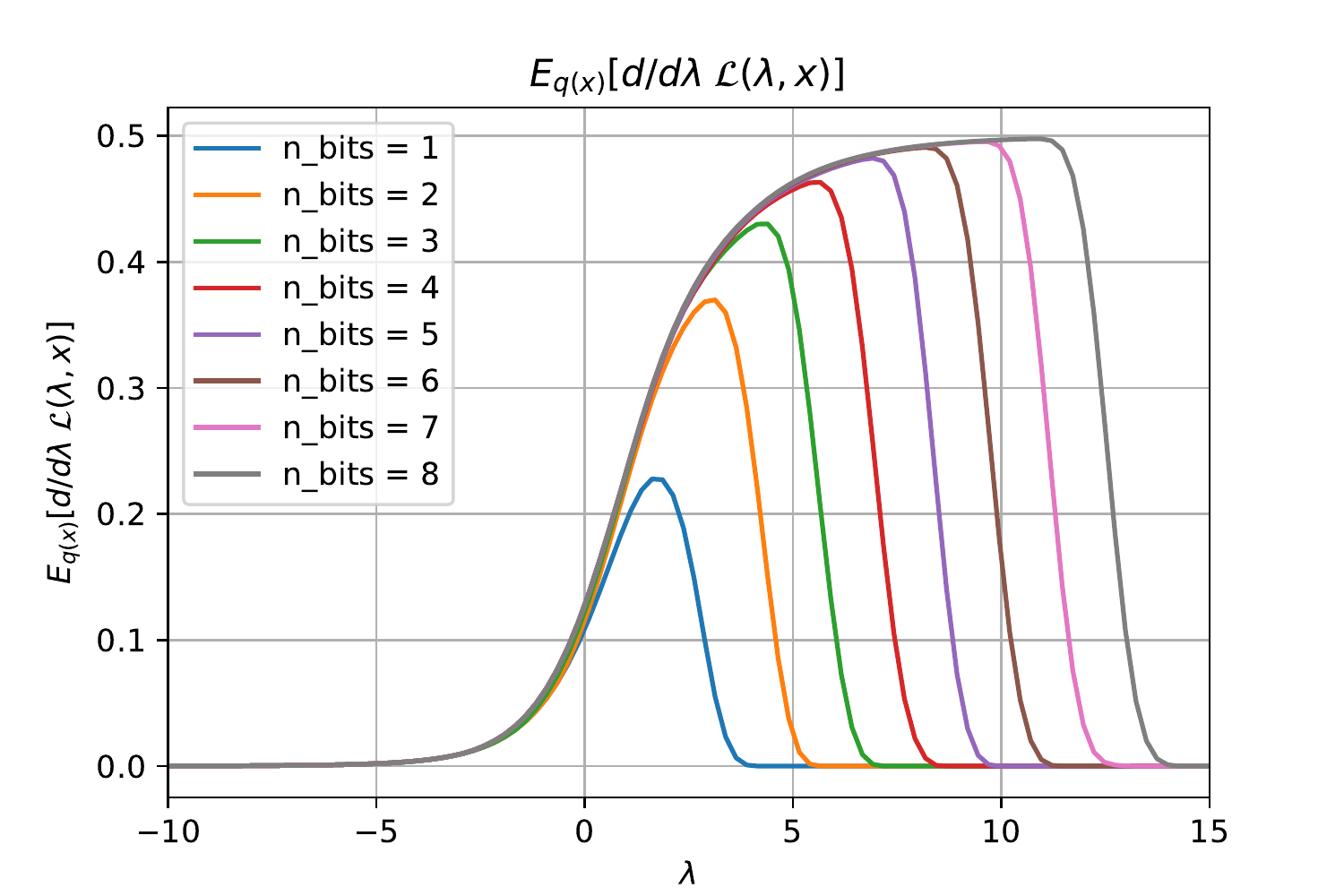}

We can visualize the contribution of each additional bit to the loss, by substracting the curve for $n-1$ bits from the curve for $n$ bits:

\includegraphics[width=1.0\textwidth]{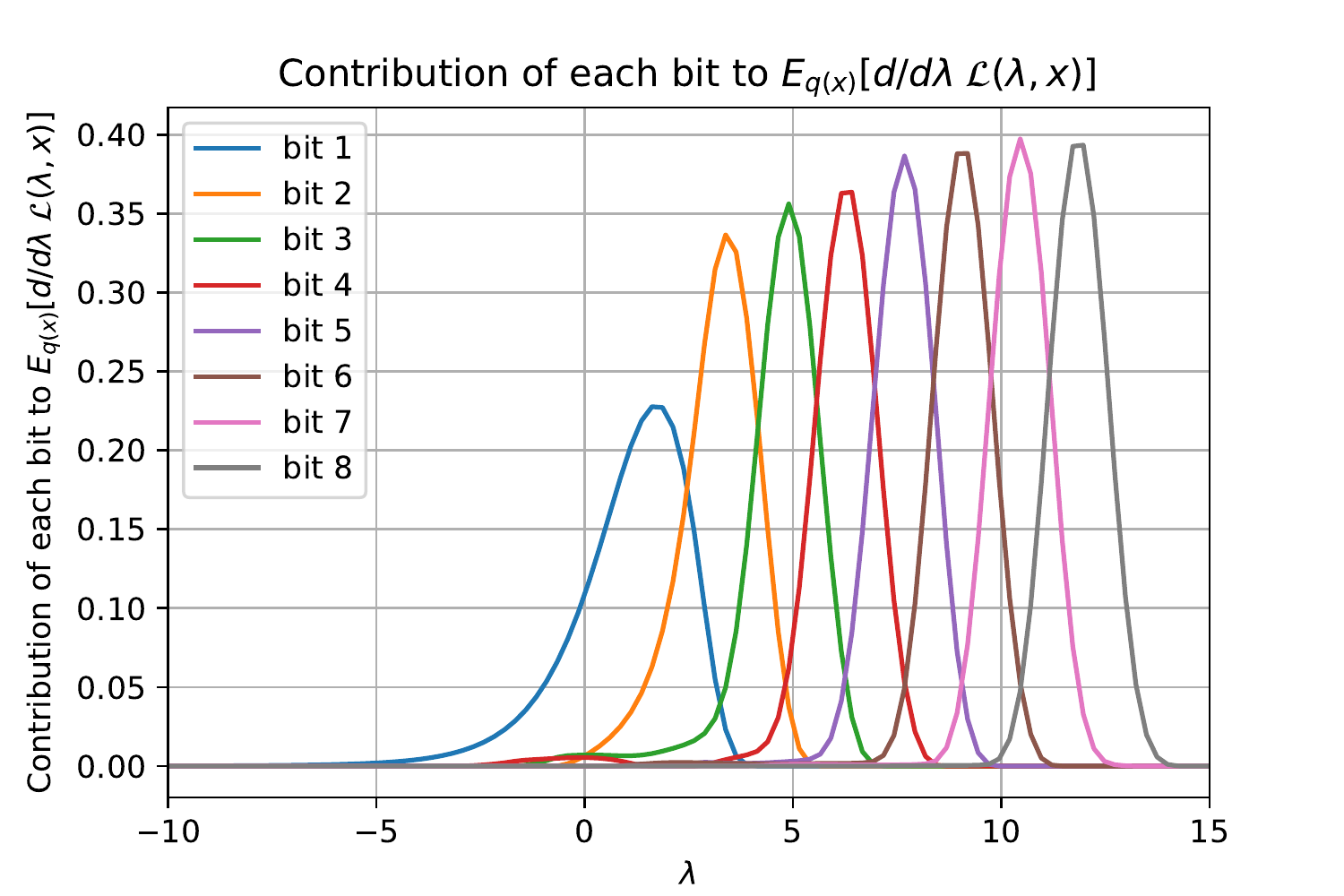}

The area under each of these curves is exactly 1 bit.

Training with 5-bit data is similar to training on the original (8-bit) data, but with a sigmoidal weighting that goes down sharply between $\lambda=7.5$ and $\lambda=10$. In fact, the 5-bit unweighted loss curve is very similar to the 8-bit loss curve, when using the following weighting function:
\begin{align}
    w(\lambda) = F_{\mathcal{N}}((-2(\lambda - 8.4)))
\label{eq:w_5bit}
\end{align}
where $F_{\mathcal{N}}$ is the CDF of a standard Gaussian, as visualized in this graph:

\includegraphics[width=1.0\textwidth]{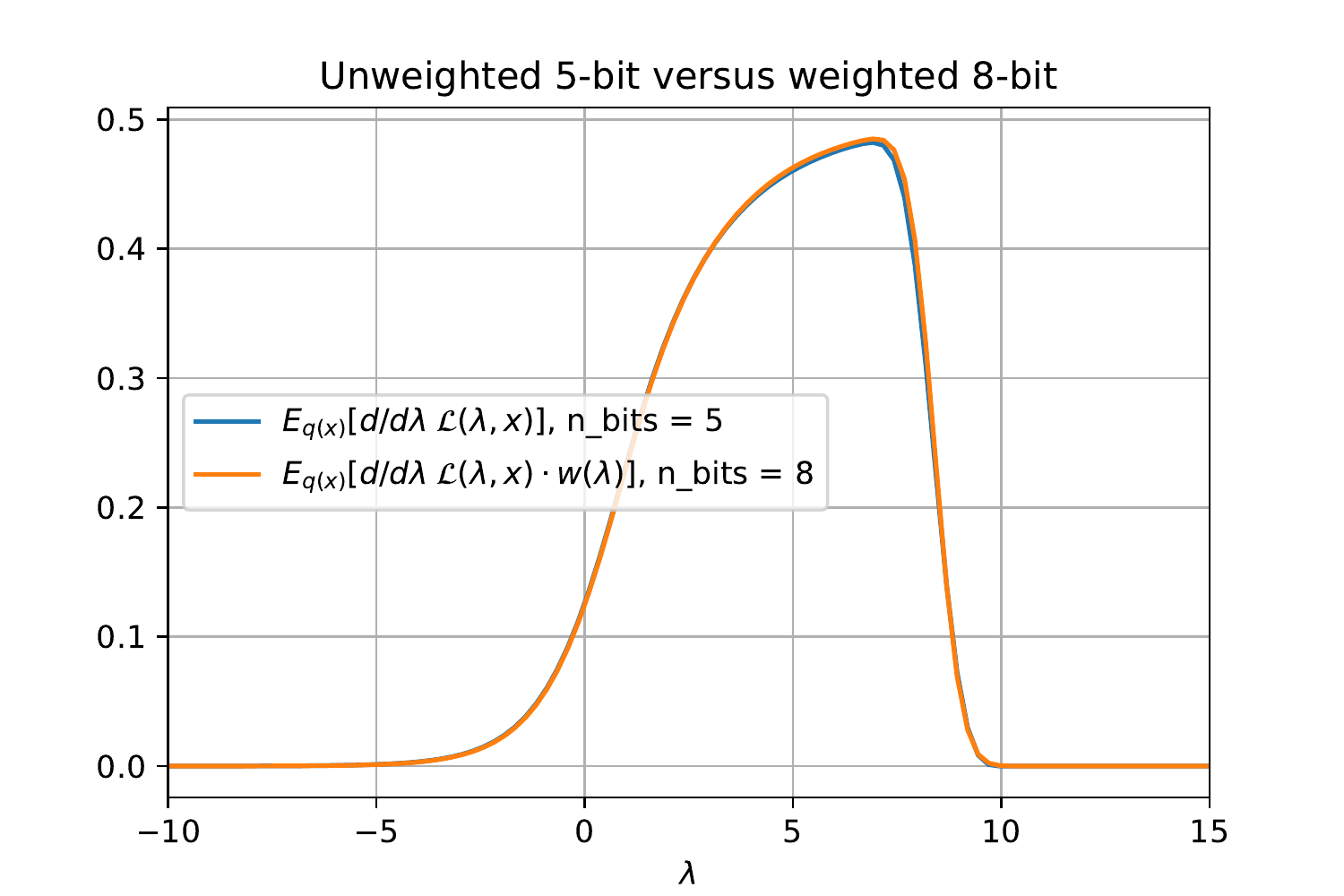}

Interestingly, the weighting function in \Eqref{eq:w_5bit} gives much more weight to low noise levels than the weighting functions used in this paper, as visualized below, where '5-bit-like weighting' is the weighting form \Eqref{eq:w_5bit}:

\includegraphics[width=1.0\textwidth]{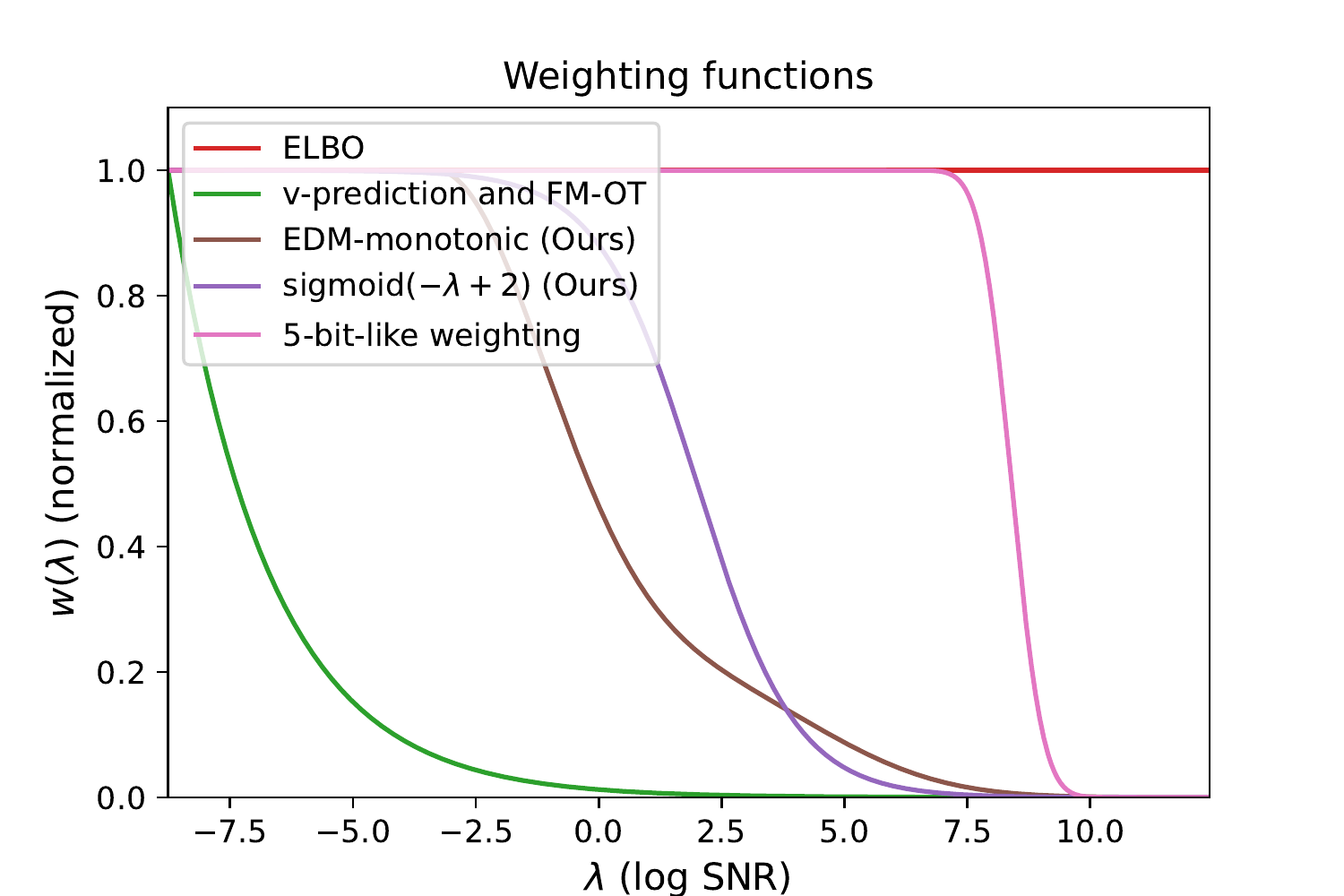}

\section{Limitations}

It is important to emphasize that our empirical results, like other deep learning approaches, depend on the choice of hyper-parameters. A change in, for example, the dataset or spatial resolution will generally require re-tuning of optimization hyperparameters, architectural choices and/or weighting functions. Such re-tuning can be time consuming and costly.

\section{Broader impact}

While our work primarily focuses on theoretical developments in the understanding and optimization of diffusion models, the advancements could have broader implications, some of which could potentially be negative. The development of more efficient and effective generative models could, on one hand, propel numerous beneficial applications, such as art and entertainment. However, it is also worth acknowledging the potential misuse of these technologies.

One notable concern is the generation of synthetic media content, for example to mislead. These fraudulent yet realistic-looking images and videos could be used for spreading disinformation or for other malicious purposes, such as identity theft or blackmail.

Regarding fairness considerations, generative models are typically trained on large datasets and could therefore inherit and reproduce any biases present in the training data. This could potentially result in unfair outcomes or perpetuate harmful stereotypes if these models are used in decision-making processes or content generation.

Mitigation strategies to address these concerns could include the gated release of models, where access to the model or its outputs is regulated to prevent misuse. Additionally, the provision of defenses, such as methods to detect AI generated media, could be included alongside the development of the generative models. Monitoring mechanisms could also be implemented to observe how a system is being used and to ensure that it learns from feedback over time in an ethical manner.

These issues should be considered when applying the techniques we propose. The community should strive for an open and ongoing discussion about the ethics of AI and the development of strategies to mitigate potential misuse of these powerful technologies.

\section{Samples from our model trained on 512 $\times$ 512 ImageNet}
\label{sec:samples}

Below we provide random samples from our highest-resolution (512x512) model trained on ImageNet. We did not cherry-pick, except that we removed depictions of humans due to ethical guidelines. Samples in Figures \ref{fig:samples1} and \ref{fig:samples2} are generated without guidance, while samples in Figures \ref{fig:samples3} and \ref{fig:samples4} are generated with guidance strength 4.

\begin{figure}
    \centering
    \includegraphics[width=\textwidth]{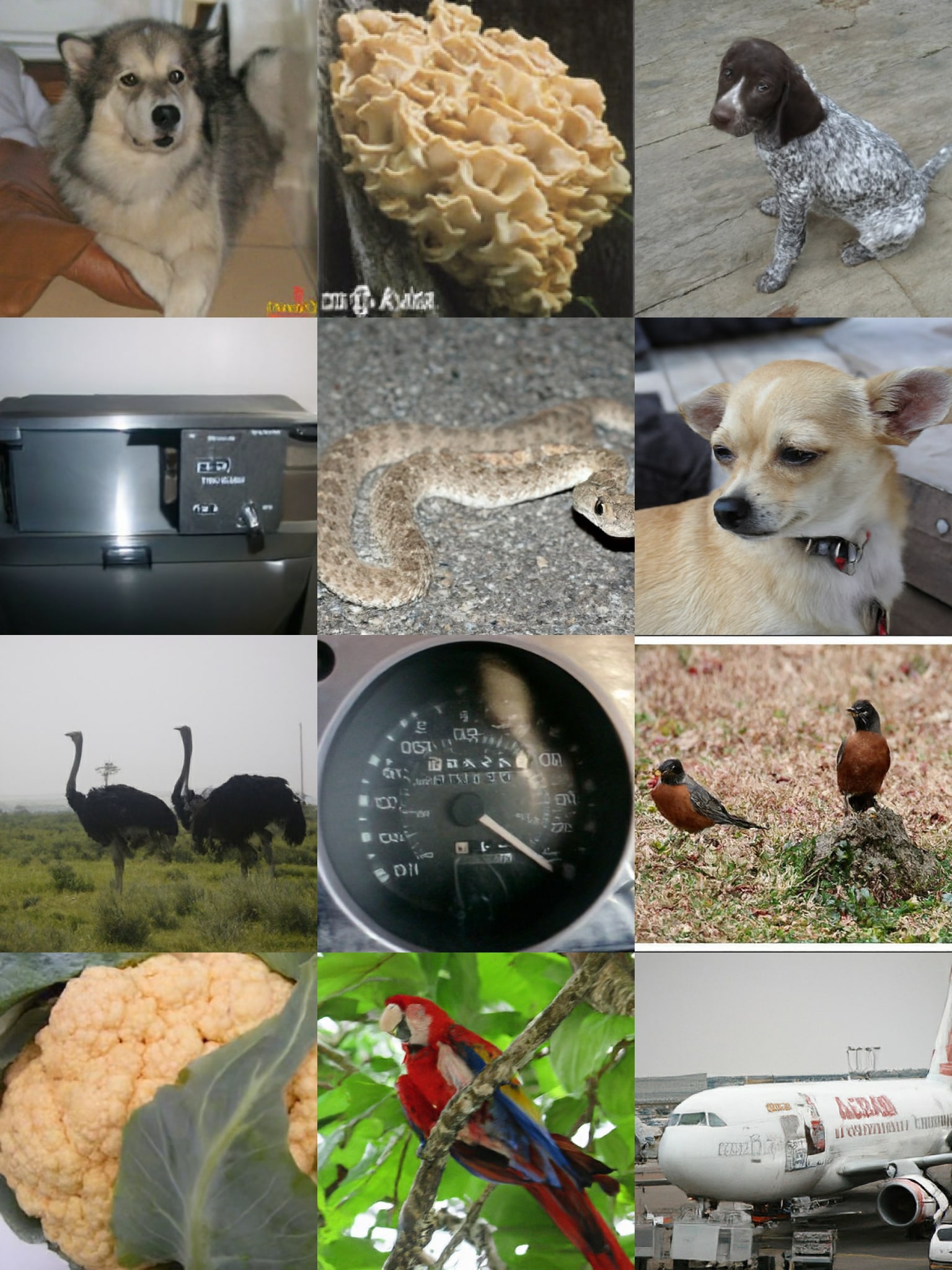}
    \caption{Random samples from our 512x512 ImageNet model, without guidance.}\label{fig:samples1}
\end{figure}

\begin{figure}
    \centering
    \includegraphics[width=\textwidth]{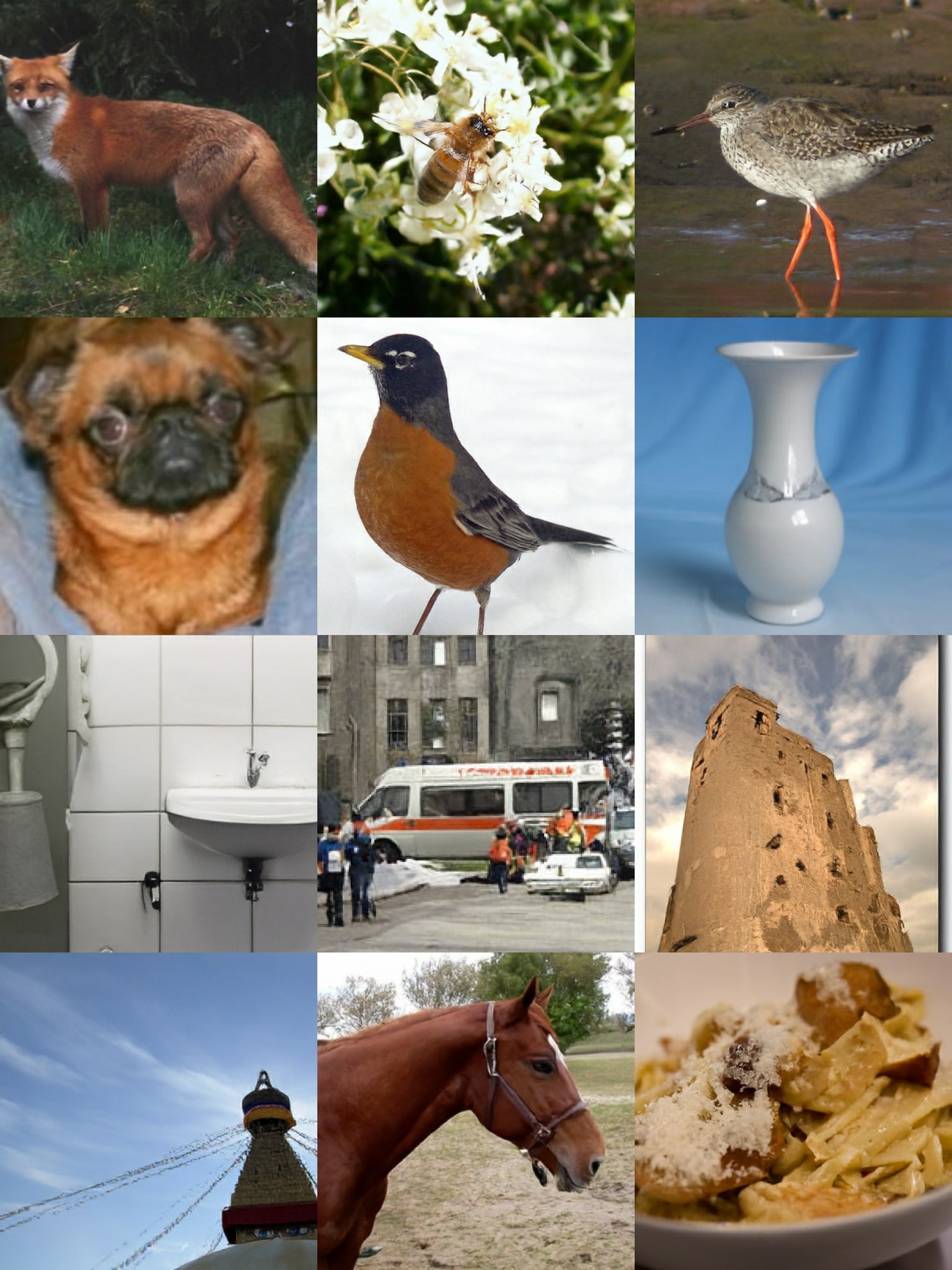}
    \caption{More random samples from our 512x512 ImageNet model, without guidance.}\label{fig:samples2}
\end{figure}

\begin{figure}
    \centering
    \includegraphics[width=\textwidth]{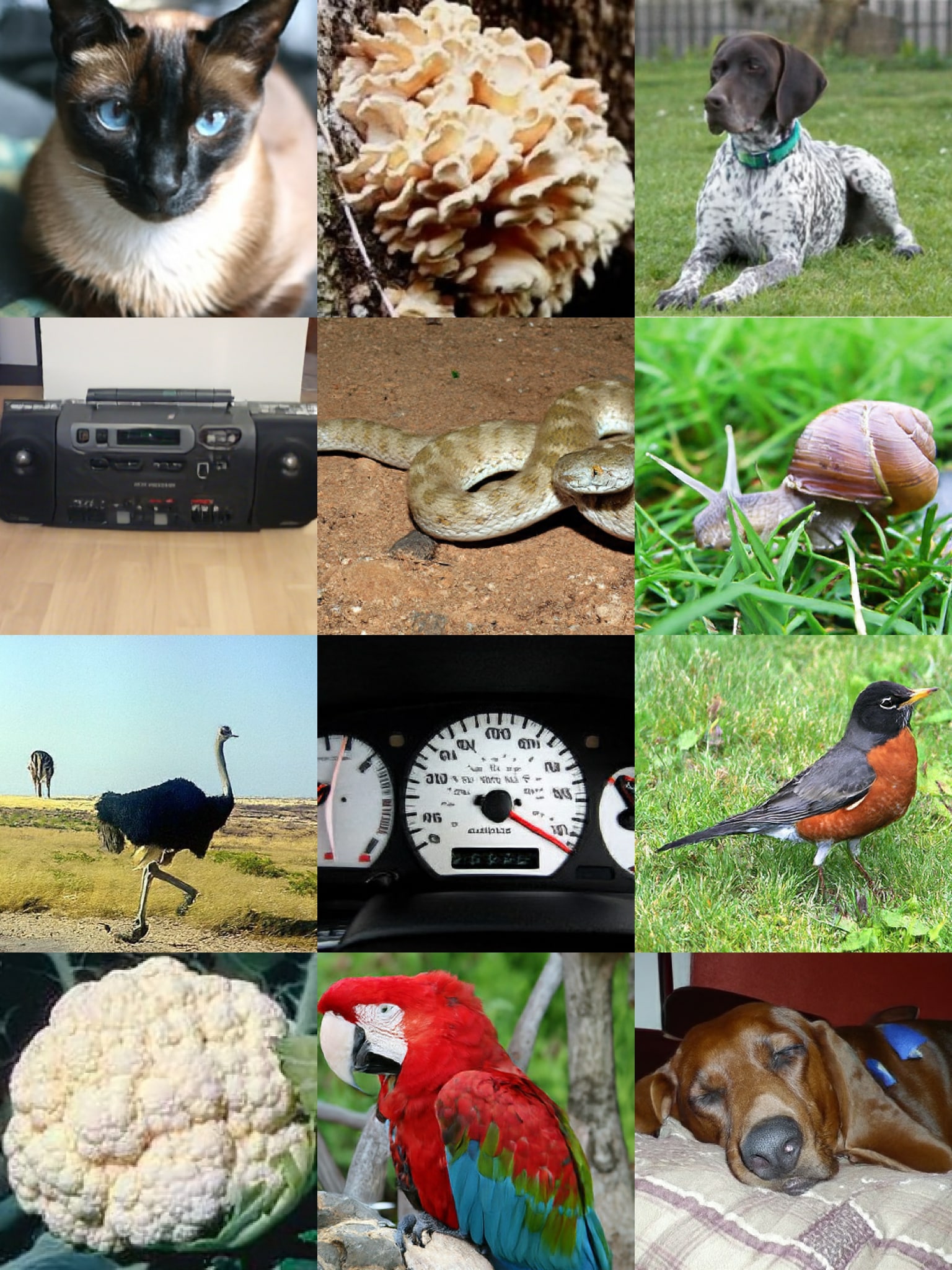}
    \caption{Random samples from our 512x512 ImageNet model, with guidance strength 4.}\label{fig:samples3}
\end{figure}

\begin{figure}
    \centering
    \includegraphics[width=\textwidth]{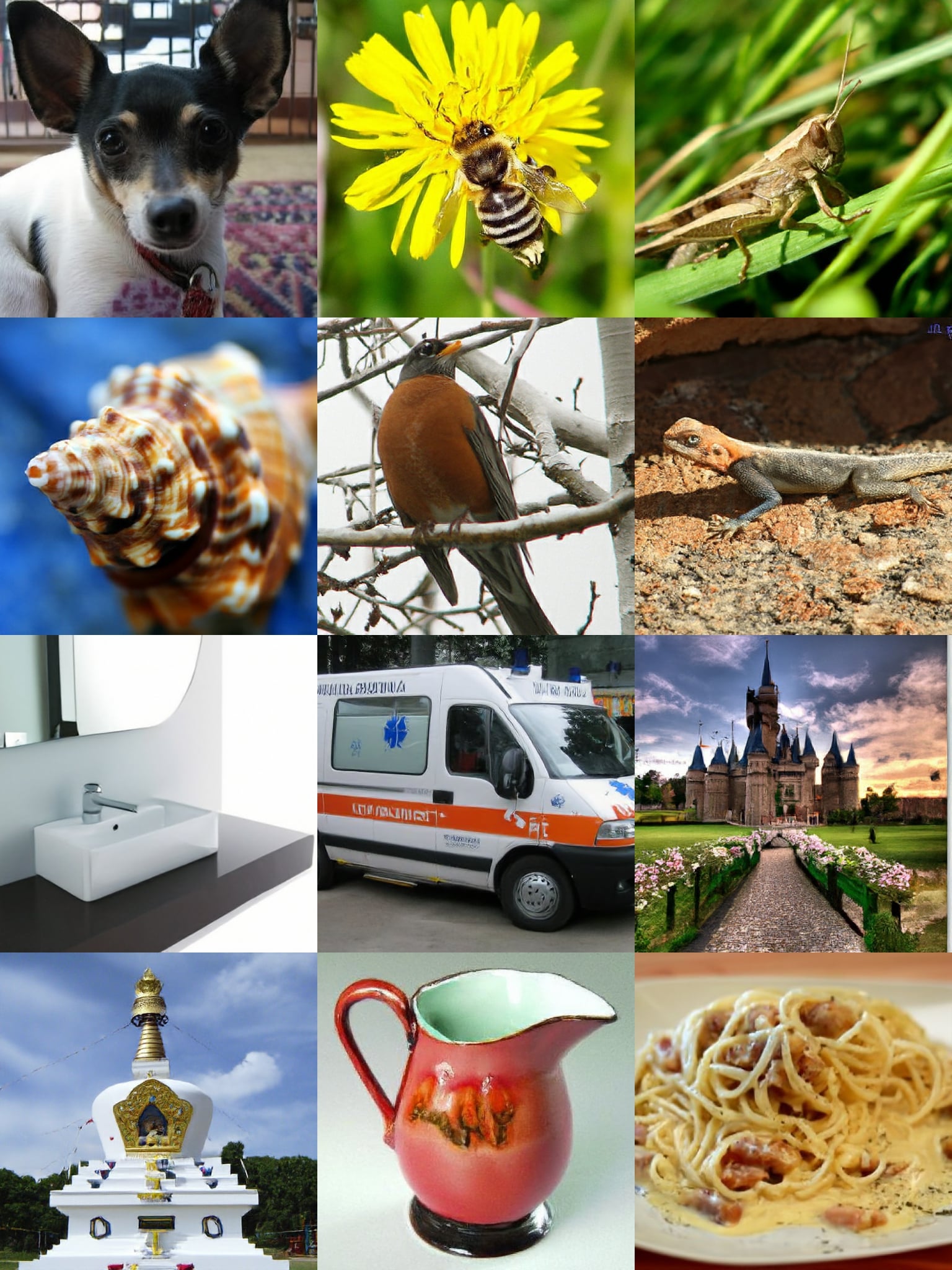}
    \caption{More random samples from our 512x512 ImageNet model, with guidance strength 4.}\label{fig:samples4}
\end{figure}

\end{document}